\newcommand{\Code}{\url{https://github.com/yaohungt/Pointwise_Dependency_Neural_Estimation
}}
\theoremstyle{definition} 
\newtheorem{theorem}{Theorem}
\newtheorem{defn}{Definition}
\newtheorem{prop}{Proposition}
\newtheorem{lemma}{Lemma}
\newtheorem{assumption}{Assumption}
\DeclarePairedDelimiterX{\infdivx}[2]{(}{)}{%
  #1\;\delimsize\|\;#2%
}
\newcommand{\xxspace}{\mathcal{X}}
\newcommand{\yyspace}{\mathcal{Y}}
\newcommand{\defeq}{\vcentcolon=}
\newcommand{\eps}{\varepsilon}
\newcommand{\Exp}{\mathbb{E}}
\newcommand{\kldiv}{D_{\text{KL}}\infdivx}
\newcommand{\RR}{\mathbb{R}}
\newcommand{\FF}{\mathcal{F}}
\newcommand{\covering}{\mathcal{N}}
\newcommand{\Nat}{\mathbb{N}}
\title{\Large Neural Methods for Point-wise Dependency Estimation}
\author{%
  Yao-Hung Hubert Tsai$^1$, Han Zhao$^2$\thanks{Work done at Carnegie Mellon University.}, \\ {\bf Makoto Yamada}$^{34}${\bf,  Louis-Philippe Morency}$^1${\bf , Ruslan Salakhutdinov}$^1$ \\
 $^1$Carnegie Mellon University, $^2$D.E.\ Shaw \& Co., $^3$ Kyoto University, $^4$RIKEN AIP 
}
\begin{document}

\maketitle

\begin{abstract}
Since its inception, the neural estimation of mutual information (MI) has demonstrated the empirical success of modeling expected dependency between high-dimensional random variables. However, MI is an aggregate statistic and cannot be used to measure point-wise dependency between different events. In this work, instead of estimating the expected dependency, we focus on estimating point-wise dependency (PD), which quantitatively measures how likely two outcomes co-occur. 
We show that we can naturally obtain PD when we are optimizing MI neural variational bounds. However, optimizing these bounds is challenging due to its large variance in practice. To address this issue, we develop two methods (free of optimizing MI variational bounds): Probabilistic Classifier and Density-Ratio Fitting.
We demonstrate the effectiveness of our approaches in 1) MI estimation, 2) self-supervised representation learning, and 3)
cross-modal retrieval task. 
\end{abstract}

\section{Introduction}

Mutual Information (MI) measures the average statistical dependency between two random variables, and it has found abundant applications in practice, such as feature selection~\cite{peng2005feature,chen2018learning}, interpretable factor discovery~\cite{chen2016infogan,tsai2018learning}, genetic association studies~\cite{zhang2012inferring}, to name a few. Recent work~\cite{belghazi2018mine,poole2019variational} proposed to use neural networks with gradient descent to estimate MI, which empirically scales better in high-dimension settings as compared to classic approaches (e.g., Kraskov (KSG)~\cite{kraskov2004estimating} estimator), which are known to suffer from the curse of dimensionality. Inspired by this line of work, we take a step further to present neural methods for point-wise dependency (PD) estimation.
At a colloquial level, PD serves to understand the instance-level dependency between a pair of events taken by two random variables, which gives us a fine-grained understanding of the outcome. Formally, it can be realized as the ratio between likelihood of their co-occurrence to the likelihood of the product events: $p(x,y)/p(x)p(y)$ with $x$ and $y$ being the corresponding outcomes.

At first glance, it may seem straightforward to estimate PD by adopting prior density ratio estimation approaches~\cite{sugiyama2012density3,sugiyama2012density} to directly calculate the ratio between $p(x,y)$ and $p(x)p(y)$. Nonetheless, for the sake of tractability, previous methods are mainly kernel-based approaches that might be inadequate to scale to high-dimensional and complex-structured data. In this work, we introduce approaches for PD estimation that leverage the recent advances in rich and flexible neural networks. We show that we can naturally obtain PD when we are optimizing MI neural variational bounds~\cite{belghazi2018mine,poole2019variational}. 
However, estimating these MI bounds often results in inevitably large variance~\cite{song2019understanding}. To address this concern, we develop two data-driven approaches: {\em Probabilistic Classifier} and {\em Density-Ratio Fitting}. {\em Probabilistic Classifier} turns PD estimation into a supervised binary classification task, where we train a classifier to distinguish the observed joint distribution from the product of marginal distribution. This approach adopts cross-entropy loss using neural networks, which is favorable for optimization and exhibits a stable training trajectory with less variance. {\em Density-Ratio Fitting} seeks to minimize the least-square difference between the true and the estimated PD. Its objective involves no logarithm and exponentiation; hence, it is practically preferable due to its numerical stability. 

We empirically analyze the advantages of PD neural estimation on three applications. First, we cast the challenging MI estimation problem to be a PD estimation problem. The re-formulation bypasses calculating MI lower bounds in prior work~\cite{belghazi2018mine,poole2019variational}, which suffers from large variance~\cite{song2019understanding} in practice. Our empirical results demonstrate the low variance and bias of the proposed approach when comparing to prior MI neural estimators. Second, our PD estimation objectives also inspire new losses for contrastive self-supervised representation learning. Surprisingly, {\em Density-Ratio Fitting} inspired loss results in a consistent improvement over prior work in both shallow~\cite{tschannen2019mutual} and deep~\cite{bachman2019learning} neural architectures. Third, we study the use of PD estimation for data containing information across modalities. More specifically, we analyze the cross-modal retrieval task on human speech and text corpora. We make our experiments publicly available at \Code.

\section{Related Work}
\paragraph{Point-wise Dependency Estimation} Prior literature studies point-wise dependency (PD) with three groups of estimation methods: {\em counting-based}~\cite{church1990word,bouma2009normalized,levy2014neural}, {\em kernel-based}~\cite{yokoi2018pointwise}, and {\em likelihood-based}~\cite{li2015diversity}. {\em Counting-based} methods approximate the joint density by counting the occurrence of the pair (i.e., $(x,y)$) and the marginal density by counting the presence of the individual outcome (i.e., $x$ or $y$). Counting based approaches can only work on discrete data and may be unrealistic when the data is sparse. {\em Kernel-based} method, particularly pointwise HSIC~\cite{yokoi2018pointwise}, can be seen as a smoothed variant of the counting-based methods, which adopts the kernel to measure the similarity between sparse data. Although this method manifests nice robustness to sparse data, its computational cost is high with high-dimensional data. {\em Likelihood-based} approaches instead approximate conditional likelihood (i.e., $p(y|x)$) and marginal likelihood (i.e., $p(y)$) using function approximators such as neural networks. Although this approach can be adapted to continuous data, it involves marginal likelihood estimation, which is challenging~\cite{kingma2013auto,goodfellow2014generative} and may perform poorly in practice. On the other hand, our presented approaches involve no marginal likelihood estimation, can work on both discrete and continuous data, and leverage neural networks with gradient descent in high-dimensional settings.

\paragraph{Density Ratio Estimation} To calculate the ratio between densities ($p(x)/q(x)$), prior density ratio estimation approaches~\cite{sugiyama2012density3,sugiyama2012density} propose to estimate the ratio directly and avoid estimating the density ($p(x)$ and $q(x)$). For example, Sugiyama {\em et al.}~\cite{sugiyama2012density3} fit the true density ratio model under the Bregman divergence~\cite{bregman1967relaxation} and further develop a robust density estimation method under the power divergence~\cite{basu1998robust}. While it is straightforward to apply these approaches to PD estimation, these approaches are studied in the context of kernel-based methods, which can make it difficult to apply in practice when data is high-dimensional and complex-structured. Our approaches contrarily take advantage of high-capacity neural networks.

\paragraph{Neural Methods for Mutual Information Estimation} Recent approaches~\cite{belghazi2018mine,poole2019variational} present neural methods that estimate mutual information (MI) via its variational bounds. They consider MI 1) lower bounds such as Donsker-Varadhan bound~\cite{donsker1983asymptotic} and Nguyen-Wainwright-Jordan bound~\cite{nguyen2010estimating}; and 2) upper bound such as Barber-Agakov bound~\cite{barber2003algorithm}. These bounds exhibit inevitable large variance~\cite{song2019understanding} and have severe training instability in practice~\cite{hjelm2018learning,tschannen2019mutual}. In our discussion, we show that we can obtain PD when optimizing these bounds. Additionally, we present alternative PD estimation methods that do not involve calculating MI variational bounds and are favorable in practice.

\section{Point-wise Dependency Neural Estimation}

Our paper aims to identify the association for a pair of outcomes $(x, y)\in\xxspace\times\yyspace$ by studying their point-wise dependency. We use an uppercase letter to denote a random variable (i.e., $X$), a lowercase letter to indicate an outcome $x$ drawn from a particular distribution (i.e., $x\sim P_X$), and a calligraphy letter $\xxspace$ to represent a sample space (i.e., $x \in \xxspace$). The joint distribution of $X,Y$ is represented by $P_{X,Y}$, and the product of their marginals is represented by $P_{X}P_{Y}$. Throughout the paper, we use the conventional notation $I(X;Y)$ to denote the mutual information between random variables $X$ and $Y$.

Formally, we define the following point-wise dependency (PD) to quantitatively measure the discrepancy between {\em the probability of their co-occurrence} and {\em the probability of independent occurrences}. 

\begin{defn}[Point-wise Dependency]
Given a pair of outcomes $(x,y)\sim P_{X,Y}$, their point-wise dependency is defined as $r(x,y)\defeq p(x,y)/p(x)p(y)$.
\label{defn:PD}
\end{defn}
PD is non-negative. Intuitively, when $r(x,y) > 1$, it means $(x, y)$ co-occur more often than their independent occurances. Similarly, when $r(x, y) \leq 1$, it means they co-occur less frequently. Our goal is to estimate $r(x,y)$ by approximating it using neural network $\hat{r}_{\theta}(x, y)$ with parameter $\theta\in\Theta$. 

\subsection{Mutual Information and Point-wise Dependency}
\label{subsec:pmi_esti}
A related quantitative measurement of point-wise dependency is Point-wise mutual information (PMI)~\cite{bouma2009normalized}, which is the logarithm of PD (PMI~$\defeq f(x,y) = \log r(x, y)$). 
In this subsection, we shall discuss parametrized estimation of PMI using neural networks $\hat{f}_\theta (x,y)$ with parameter $\theta$. By definition, mutual information $I(X;Y)$ is the expected value of PMI: $I(X;Y) = \Exp_{P}[{\rm log}\,r(X,Y)] = \Exp_{P}[f(X,Y)]$. Hence by using $\hat{f}_\theta$ as a plug-in, we can obtain an approximation of the mutual information with $\Exp_P[\hat{f}_\theta(X, Y)]$. Reversely, we will show that PMI can be obtained when optimizing MI (neural) variational bounds and present two methods to do so, one as unconstrained optimization and the other as constrained optimization problem.

\paragraph{(Unconstrained Optimization) Variational Bounds of Mutual Information}
Recent work~\cite{belghazi2018mine,poole2019variational} proposes to estimate MI using neural networks by exploiting either the variational MI lower bounds~\cite{belghazi2018mine} or the variational MI form~\cite{poole2019variational}. In particular, Belghazi {\em et al.}~\cite{belghazi2018mine} proposed the $I_{\rm DV}$ estimator, standing for Donsker-Varadhan (DV) lower bound~\cite{donsker1983asymptotic} of MI. On the other hand, Poole {\em et al.}~\cite{poole2019variational} proposed the $I_{\rm JS}$ estimator, corresponding to using f-GAN objective~\cite{nowozin2016f} as a lower bound of Jensen-Shannon (JS) divergence between $P_{X,Y}$ and $P_{X}P_{Y}$. $I_{\rm JS}$ is found to be more stable then $I_{\rm DV}$ and other variational lower bounds, and thus it is widely used in prior work~\cite{hjelm2018learning,poole2019variational,song2019understanding}, defined as follows:
\begin{equation}
I_{\rm JS} \defeq \underset{\theta \in \Theta}{\rm sup}
\,\mathbb{E}_{P_{X,Y}}\Big[-{\rm softplus} \Big(-\hat{f}_\theta(x,y)\Big)\Big] - \mathbb{E}_{P_{X} P_{Y}}\Big[{\rm softplus} \Big(\hat{f}_\theta(x,y)\Big)\Big],
\label{eq:I_JS}
\end{equation}
where we use ${\rm softplus}$ to denote ${\rm softplus}\,(x) = {\rm log}\,\left(1+{\rm exp}\,(x)\right)$. It could be readily verified that the optimal $\hat{f}_\theta^*(x,y) = \log\left(p(x,y)/p(x)p(y)\right)$~\cite{poole2019variational}. We refer this objective as {\em Variational Bounds of Mutual Information} approach for PMI estimation. 


\paragraph{(Constrained Optimization) Density Matching}
This method considers to match the true joint density $p(x,y)$ and the estimated joint density $\hat{p}_\theta (x,y)  \defeq e^{\hat{f}_\theta (x,y)} p(x) p(y)$ by minimizing the following KL divergence:
\begin{equation*}
\underset{\theta \in \Theta}{\rm inf}\,\,\kldiv{P_{X,Y}}{\hat{P}_{\theta X,Y}} \defeq \underset{\theta \in \Theta}{\rm inf}\,\,I(X;Y) - 
 \mathbb{E}_{P_{X,Y}}\Big[\hat{f}_\theta(x,y) \Big] \Leftrightarrow\underset{\theta \in \Theta}{\rm sup}\,\,
 \mathbb{E}_{P_{X,Y}}\Big[\hat{f}_\theta(x,y) \Big].
\end{equation*}
Since KL divergence has a minimum value of $0$, it is easy to see that $\forall\theta\in\Theta$, $\mathbb{E}_{P_{X,Y}}[\hat{f}_\theta(x,y)]$ is a lower bound of MI. Note that this objective is a constrained optimization problem, since we need to ensure the estimated joint density is a valid density function: $\hat{p}_\theta (x,y)\geq 0$ and $\iint \hat{p}_\theta (x,y)~{\rm d}x{\rm d}y = 1$. Equivalently, the constraints could be formed as $e^{\hat{f}_\theta (x,y)}\geq 0$ (trivially true) and $\mathbb{E}_{P_XP_Y}[e^{\hat{f}_\theta(x,y)}] = 1$. Putting everything together, we can reformulate the following constrained optimization problem:
\begin{equation*}
\underset{\theta \in \Theta }{\max}\,\, \mathbb{E}_{P_{X,Y}}[\hat{f}_\theta (x,y)],\quad{\text{subject to}}\,\,\mathbb{E}_{P_X  P_Y}[e^{\hat{f}_\theta(x,y)}]=1 ,
\end{equation*}
which is also called KL importance estimation procedure~\cite{sugiyama2008direct} with a unique solution $\hat{f}^*_\theta (x,y)= \log\left(p(x,y)/p(x)p(y)\right)$. The Lagrangian of the above constrained problem is
\begin{equation}
\small
\begin{split}
\underset{\theta \in \Theta }{\max}\,\, \mathbb{E}_{P_{X,Y}}[\hat{f}_\theta (x,y)] - \lambda \cdot\left( \mathbb{E}_{P_X P_Y}[e^{\hat{f}_\theta(x,y)}] - 1 \right), 
\end{split}
\label{eq:DM1}
\end{equation}
where $\lambda\in\RR$ is the dual variable. Furthermore, penalty method could also be used to transform the original constrained optimization problem to an unconstrained one:
\begin{equation}
\small
\begin{split}
\underset{\theta \in \Theta}{\max}\,\, \mathbb{E}_{P_{X,Y}}[\hat{f}_\theta (x,y)] - \eta\cdot\left({\rm log}\, \mathbb{E}_{P_X P_Y}[
e^{\hat{f}_\theta(x,y)}]\right)^2,
\end{split}
\label{eq:DM2}
\end{equation}
where $\eta > 0$ is the penalty coefficient. We refer Eq.~\eqref{eq:DM1} as {\em Density Matching I} and Eq.~\eqref{eq:DM2} as {\em Density Matching II} for PMI estimation.

\subsection{Proposed Methods for Point-wise Dependency (PD) Estimation}
In the last section, we introduce how to obtain PMI by optimizing various MI variational bounds. In this section, instead of estimating PMI, we present two methods to estimate PD ($p(x,y)/p(x)p(y)$), i.e., the {\em Probabilistic Classifier} method and the {\em Density-Ratio Fitting} method. We argue that the presented PD estimation methods admit better training stability than the PMI estimation methods discussed in the last section. On the one hand, the Probabilistic Classifier method casts PD estimation as a binary classification task, where the binary cross-entropy loss can be used and optimized in existing optimization packages~\cite{paszke2019pytorch,abadi2016tensorflow}. On the other hand, the Density-Ratio Fitting method contains no logarithm or exponentiation, which are often the roots of the instability in MI (or PMI) estimation~\cite{poole2019variational,song2019understanding}. In what follows, we present both methods in a sequel.

\paragraph{Probabilistic Classifier Method}
This approach casts the PD estimation as the problem of estimating the `class'-posterior probability. First, we use a Bernoulli random variable $C$ to classify the samples drawn from the joint density ($C=1$ for $(x,y)\sim P_{X,Y}$) and the samples drawn from product of the marginal densities ($C=0$ for $(x,y)\sim P_{X} P_{Y}$). Equivalently, the likelihood function $p(x, y\mid C=1)\defeq p(x,y)$ and $p(x, y\mid C=0)\defeq p(x)p(y)$.
By Bayes' Theorem, we re-express PD by the ratio of two class-posterior probability:
\begin{equation*}
r(x,y)  = \frac{p(x,y)}{p(x)p(y)}=\frac{p(x, y\mid C = 1)}{p(x, y\mid C = 0)}  = \frac{p(C=0)}{p(C=1)}\frac{p(C=1\mid x,y)}{p(C=0\mid x,y)}.
\end{equation*}
In the above equation, the ratio $\frac{p(C=0)}{p(C=1)}$ can be approximated by the ratio of the sample size: 
\begin{equation*}
\frac{\hat{p}(C=0)}{\hat{p}(C=1)}  = \frac{(n_{P_X P_Y}) / (n_{P_X  P_Y} + n_{P_{X,Y}})}{(n_{P_{X,Y}}) / (n_{P_X  P_Y} + n_{P_{X,Y}})}  = \frac{n_{P_X  P_Y}}{n_{P_{X,Y}}},
\end{equation*}
and we use a probability classifier $\hat{p}_\theta (C\mid x,y)$ parameterized by a neural network $\theta$ to approximate the class-posterior classifier $p(C\mid x,y)$. By adopting the binary cross-entropy loss, the objective
has the following form:
\begin{equation}
\underset{\theta \in \Theta}{\max}\,\,  \mathbb{E}_{P_{X,Y}}[{\log}\,\hat{p}_\theta (C=1\mid x,y)]  + \mathbb{E}_{P_{X}P_{Y}}[{\log}\,(1-\hat{p}_\theta (C=1\mid x,y))].
\label{eq:PC_loss}
\end{equation}
Then, bringing all the equations together, we obtain the {\em Probabilistic Classifier} PD estimator:
\begin{equation}
\hat{r}_\theta (x,y)  = \frac{n_{P_XP_Y}}{n_{P_{X,Y}}} \frac{\hat{p}_\theta (C=1\mid x,y)}{\hat{p}_\theta(C=0\mid x,y)},\quad{\rm with}\,\,(x,y)\sim P_{X,Y}\,\,{\rm or}\,\,(x,y)\sim P_{X}P_{Y}.
\label{eq:PC_all}
\end{equation}

\paragraph{Density-Ratio Fitting Method}
This approach considers to minimize the expected least-square difference between the true PD $r(x,y)$ and the estimated PD $\hat{r}_\theta(x,y)$:
\begin{equation}
\underset{\theta \in \Theta}{\rm inf}\,\,\mathbb{E}_{P_X  P_Y} [\big(r(x,y) - \hat{r}_\theta(x,y)\big)^2] \Leftrightarrow \underset{\theta \in \Theta}{\rm sup}\,\,\mathbb{E}_{P_{X,Y}}[\hat{r}_\theta(x,y)] - \frac{1}{2}\mathbb{E}_{P_X P_Y}[\hat{r}_\theta^2(x,y)].
\label{eq:DRF}
\end{equation}
The objective is also called least-square density-ratio fitting method~\cite{kanamori2009least} and has a unique solution $\hat{r}^*_\theta (x,y)= p(x,y)/p(x)p(y)$. We refer Eq.~\eqref{eq:DRF} as {\em Density-Ratio Fitting} PD estimation.

\section{Application I: Mutual Information Estimation}
By definition, as the average effect of point-wise dependency (PD), Mutual Information (MI) measures the statistical independence between random variables:
\begin{equation}
\begin{split}
I(X;Y) & = \kldiv{P_{X,Y}}{P_X P_Y}
= \iint p(x,y) {\rm log}\, \frac{p(x,y)}{p(x)p(y)} \,{\rm d}x{\rm d}y = \mathbb{E}_{P_{X,Y}}[{\rm log}\,r(x,y)] \\ 
& \approx \mathbb{E}_{ P_{X,Y}}[{\rm log}\, \hat{r}_\theta(x,y)]\approx \mathbb{E}_{P_{X,Y}}[ \hat{f}_\theta(x,y)],
\end{split}
\label{eq:MI_plugging_in} 
\end{equation}
where we estimate MI by directly plugging-in PD (i.e., $\hat{r}_\theta$ in Eq.~\eqref{eq:PC_all},~\eqref{eq:DRF}) or PMI (i.e., $\hat{f}_\theta$ in Eq.~\eqref{eq:I_JS},~\eqref{eq:DM1}, and~\eqref{eq:DM2}). In summary, we cast the MI estimation problem to a PD or PMI estimation problem.

\paragraph{Baseline Models} 
Instead of approximating MI by plugging-in the estimated PD or PMI, prior work focuses on establishing tractable and scalable bounds for MI~\cite{oord2018representation,belghazi2018mine,poole2019variational,song2019understanding}, in which the bounds can be computed via gradient descent over neural networks. Strong baselines include CPC~\cite{oord2018representation}, NWJ~\cite{belghazi2018mine}, JS~\cite{poole2019variational}, DV (MINE)~\cite{belghazi2018mine}, and SMILE~\cite{song2019understanding}. To understand the differences, we separate MI neural estimation methods into two procedures: {\em learning} and {\em inference}. The learning step learns the parameters when estimating 1) point-wise dependency/ logarithm of point-wise dependency; or 2) MI lower bound. The inference step considers the parameters from the learning step and infers value for 1) MI itself; or 2) a lower bound of MI. We summarize different approaches in Table 1. For completeness, one may see Supplementary for more details about these bounds.



\begin{table}[t!]
\caption{\small MI neural estimation methods. The estimation procedure is dissected into learning and inference phases, which may use different objectives. Baselines consider to estimate MI via lower bounds, while ours consider to estimate MI via plugging in PD ($\hat{r}_\theta$) or PMI ($\hat{f}_\theta$) estimators. 
}
\vspace{-1mm}
\parbox{.42\linewidth}{
\centering
\scalebox{0.74}{
\begin{tabular}{lcc}
\toprule
Baselines          & Learning & Inference \\
\midrule
CPC~\cite{oord2018representation}             &     $I_{\rm CPC}$~\cite{oord2018representation}                & $I_{\rm CPC}$~\cite{oord2018representation}                        \\
NWJ~\cite{belghazi2018mine} &  $I_{\rm NWJ}$~\cite{nguyen2010estimating,belghazi2018mine}                  &  $I_{\rm NWJ}$~\cite{nguyen2010estimating,belghazi2018mine} \\
JS~\cite{poole2019variational}              &   $I_{\rm JS}$~\cite{nowozin2016f} (Eq.~\eqref{eq:I_JS})                  &   $I_{\rm NWJ}$~\cite{nguyen2010estimating,belghazi2018mine} \\
DV (MINE)~\cite{belghazi2018mine}       &          $I_{{\rm DV}}$~\cite{belghazi2018mine}          &    $I_{\rm DV}$~\cite{donsker1983asymptotic,belghazi2018mine}                              \\
SMILE~\cite{song2019understanding}          &    $I_{\rm JS}$~\cite{nowozin2016f} (Eq.~\eqref{eq:I_JS})                & $I_{{\rm DV}}$~\cite{donsker1983asymptotic,belghazi2018mine} \\\bottomrule     
\end{tabular}}
}
\parbox{.55\linewidth}{
\centering
\scalebox{0.69}{
\begin{tabular}{lcc}
\toprule
Ours          & Learning & Inference \\
\midrule
Variational MI Bounds             &    I$_{\rm JS}$~\cite{nowozin2016f} (Eq.~\eqref{eq:I_JS})                &   Eq.~\eqref{eq:MI_plugging_in} with $\hat{f}_\theta$                    \\
Probabilistic Classifier &  Eq.~\eqref{eq:PC_loss}              &  Eq.~\eqref{eq:MI_plugging_in} with $\hat{r}_\theta$ in Eq.~\eqref{eq:PC_all}                      \\
Density Matching I             &   Eq.~\eqref{eq:DM1}                   &   Eq.~\eqref{eq:MI_plugging_in}  with $\hat{f}_\theta$                        \\
Density Matching II      &          Eq.~\eqref{eq:DM2}           &    Eq.~\eqref{eq:MI_plugging_in}  with $\hat{f}_\theta$ \\
Density-Ratio Fitting          &    Eq.~\eqref{eq:DRF}                 & Eq.~\eqref{eq:MI_plugging_in} with $\hat{r}_\theta$ \\
\bottomrule     
\end{tabular}}
}
\label{tbl:estimators}
\end{table}
\begin{figure}[t!]
\includegraphics[width=\textwidth]{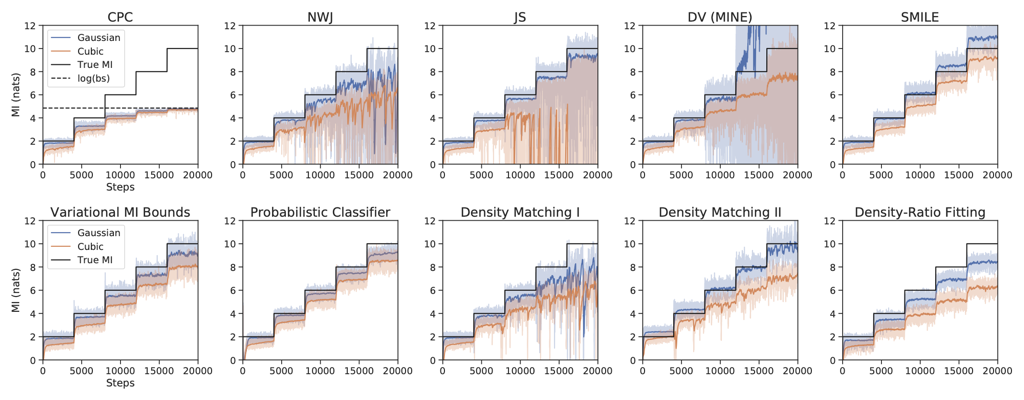}
\vspace{-7mm}
\caption{\small {\bf Gaussian} and {\bf Cubic} task for correlated Guassians with tractable ground truth MI. The upper row are the baselines and the lower row are our methods. Network, learning rate, optimizer, and batch size are fixed for all MI neural estimators. The only differences are the learning and inference objectives shown in Table~\ref{tbl:estimators}.}
\label{fig:gaussian_exp}
\vspace{-2mm}
\end{figure}

\paragraph{Benchmarking on Correlated Gaussians} To evaluate the performance between different MI neural estimators, we consider the standard tasks on correlated Gaussians~\cite{belghazi2018mine,poole2019variational,song2019understanding}. In particular, we draw $(x,y)$ from two $20$-dimensional Gaussians with correlation $\rho$, which is referred as {\bf Gaussian} task. Then, we apply a cubic transformation on $y$ so that $y\mapsto y^3$, which is referred to as {\bf Cubic} task. These two tasks have tractable ground truth MI $=-10\,{\rm log}\,(1-\rho^2)$. We train all models for $20,000$ iterations, starting from MI $=2$ and increasing it by $2$ per $4,000$ iterations. We fix the network, learning rate, optimizer, and batch size across all the estimators for a fair comparison. The only differences are the objectives considered in the learning and inference in MI estimation (shown in Table~\ref{tbl:estimators}). 

\paragraph{Results \& Discussions} We present the results in Figure~\ref{fig:gaussian_exp} and leave more training details in Supplementary. In the following, we discuss bias-variance trade-offs for different approaches. We first discuss general observations. Most of the estimators have both larger bias and variance with larger ground truth MI. The only exception is CPC~\cite{oord2018representation}, where its value is upper bounded by ${\rm log\,(batch\_size)}$~\cite{poole2019variational}. The bias is also larger in {\bf Gaussian} task than in {\bf Cubic} task except for DV~\cite{belghazi2018mine}. Next, we discuss the differences among estimators in detail. CPC~\cite{oord2018representation} has the smallest variance, yet it is highly biased. Although having larger variance than CPC, SMILE~\cite{song2019understanding}/ Variational MI Bounds/ Probabilistic Classifier/ Density Matching I \& II/ Density-Ratio Fitting approaches have a much lower bias. Among them, Probabilistic Classifier and Density-Ratio Fitting approaches have the smallest variance. NWJ~\cite{belghazi2018mine}/ JS~\cite{poole2019variational}/ DV~\cite{belghazi2018mine}, whereas, have both large variance and bias. Note that JS~\cite{poole2019variational} has larger variance is because using $I_{\rm NWJ}$ objective during inference. To sum up, we see that the plug-in MI estimators enjoy smaller variance and bias when comparing to most of the lower bound methods. 

\paragraph{Theoretical Analysis}
In Eq.~\eqref{eq:MI_plugging_in}, we present a high-level intuition that a good estimation of the PD function $\hat{r}_\theta(x,y)$ could be used to estimate the mutual information. In what follows, we present a formal justification for this argument. To begin with, let $P_{X,Y}^{(n)}$ denote the empirical distribution of the ground-truth joint distribution $P_{X,Y}$ estimated from $n$ samples drawn uniformly at random from $P_{X,Y}$. Then our estimator of the mutual information is given by $\widehat{I}_\theta^{(n)}(X;Y)\defeq \Exp_{P^{(n)}_{X,Y}}[\log \hat{r}_\theta(x, y)]$.

At a high level, our arguments contain two parts. In the first part, we show that w.h.p. (with high probability)\ $\widehat{I}_\theta^{(n)}(X;Y)$ is close to $\Exp_{P_{X,Y}}[\log \hat{r}_\theta(x, y)]$. In the second part, we apply the universal approximation lemma of neural networks~\citep{hornik1989multilayer} to show that there exists $\hat{r}_\theta(\cdot, \cdot)$ that is close to $r(\cdot,\cdot)$. Formally, let $\FF\defeq \{\hat{r}_\theta: \theta\in\Theta\subseteq\RR^d\}$ be the set of neural networks where the parameter $\theta$ is a $d$-dimensional vector. Throughout the analysis, we assume the following assumptions:
\begin{assumption}[Boundedness of the density ratio]
There exist universal constants $C_l \leq C_u$ such that $\forall \hat{r}_\theta\in\FF$ and $\forall x,y$, $C_l\leq \log\hat{r}_\theta(x, y)\leq C_u$.
\label{assu:bound}
\end{assumption}
\begin{assumption}[$\log$-smoothness of the density ratio]
There exists $\rho > 0$ such that for $\forall x,y$ and $\forall \theta_1, \theta_2\in\Theta$, $|\log\hat{r}_{\theta_1}(x, y) - \log\hat{r}_{\theta_2}(x, y)|\leq \rho\cdot \|\theta_1 - \theta_2\|$.
\label{assu:smooth}
\end{assumption}
Assumption~\ref{assu:bound} basically asks the output of a neural net to be bounded and Assumption~\ref{assu:smooth} says that for any given input pair, the output of the network should only change slightly if we just slightly perturb the network weights. Both assumptions are mostly verified in practical networks. Based on these two assumptions, the following lemma is adapted from~\citet{bartlett1998sample} that bounds the rate of uniform convergence of a function class in terms of its covering number. The original lemma is based on the $L_\infty$ norm of the function class; whereas the following one, we use the $L_2$ norm on $\Theta$.
\begin{restatable}{lemma}{covers}
(estimation). Let $\eps > 0$ and $\covering(\Theta, \eps)$ be the covering number of $\Theta$ with radius $\eps$ under $L_2$ norm. Let $P_{X,Y}$ be any distribution where $S = \{x_i, y_i\}_{i=1}^n$ are sampled from and define $M\defeq C_u - C_l$, then 
\begin{equation}
    \Pr_S\left(\sup_{\hat{r}_\theta\in\FF} \left|\widehat{I}_\theta^{(n)}(X;Y) - \Exp_{P_{X,Y}}[\log \hat{r}_\theta(x, y)]\right| \geq \eps\right) \leq 2\mathcal{N}(\Theta, \eps/ 4\rho)\exp\left(-\frac{n\eps^2}{2M^2}\right).
\end{equation}
\label{lemma:1}
\end{restatable}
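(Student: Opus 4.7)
The plan is to establish uniform convergence over the parameter class $\FF$ by the standard covering-number plus Hoeffding route. First I would fix a minimal $\eps/(4\rho)$-cover of $\Theta$ under the $L_2$ norm, producing centers $\theta_1, \ldots, \theta_N$ with $N = \covering(\Theta, \eps/(4\rho))$. The idea is then to (i) control the deviation $|\widehat{I}_{\theta_j}^{(n)}(X;Y) - \Exp_{P_{X,Y}}[\log \hat{r}_{\theta_j}]|$ at each center via a concentration inequality, and (ii) use the smoothness Assumption~\ref{assu:smooth} to pass from the cover to an arbitrary $\theta \in \Theta$.

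For step (i), by Assumption~\ref{assu:bound}, each random variable $\log \hat{r}_{\theta_j}(x_i, y_i)$ takes values in $[C_l, C_u]$, so Hoeffding's inequality applied to the i.i.d.\ sample $S$ yields
\begin{equation*}
\Pr_S\!\left(\left|\widehat{I}_{\theta_j}^{(n)}(X;Y) - \Exp_{P_{X,Y}}[\log \hat{r}_{\theta_j}]\right| \geq \eps/2\right) \leq 2\exp\!\left(-\frac{n\eps^2}{2M^2}\right),
\end{equation*}
with $M = C_u - C_l$. A union bound over the $N$ centers then gives total failure probability at most $2N\exp(-n\eps^2/(2M^2))$.

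For step (ii), given any $\hat{r}_\theta \in \FF$, there is a center $\theta_j$ with $\|\theta - \theta_j\| \leq \eps/(4\rho)$. By Assumption~\ref{assu:smooth}, $|\log \hat{r}_\theta(x,y) - \log \hat{r}_{\theta_j}(x,y)| \leq \eps/4$ pointwise, and taking empirical average (resp.\ expectation under $P_{X,Y}$) preserves this bound. A three-term triangle inequality
\begin{equation*}
|\widehat{I}_\theta^{(n)} - \Exp_{P_{X,Y}}[\log \hat{r}_\theta]| \leq |\widehat{I}_\theta^{(n)} - \widehat{I}_{\theta_j}^{(n)}| + |\widehat{I}_{\theta_j}^{(n)} - \Exp_{P_{X,Y}}[\log \hat{r}_{\theta_j}]| + |\Exp_{P_{X,Y}}[\log \hat{r}_{\theta_j}] - \Exp_{P_{X,Y}}[\log \hat{r}_\theta]|
\end{equation*}
then bounds the left side by $\eps/4 + \eps/2 + \eps/4 = \eps$ on the good event, giving the stated conclusion.

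The argument is routine; the only place for care is bookkeeping the constants so that the cover radius $\eps/(4\rho)$, the Hoeffding deviation $\eps/2$, and the final tolerance $\eps$ combine correctly via the triangle inequality. The main conceptual point is just that Assumption~\ref{assu:smooth} controls the discretisation error uniformly in $(x,y)$, so the same $\eps/4$ slack works for both the empirical average and the population expectation without extra data-dependent terms.
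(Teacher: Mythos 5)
Your proposal is correct and follows essentially the same route as the paper's proof: an $\eps/(4\rho)$-cover of $\Theta$, Hoeffding's inequality with deviation $\eps/2$ at each center, a union bound over centers, and the Lipschitz assumption to extend from centers to all of $\Theta$. The only cosmetic difference is that you split the discretisation error into two $\eps/4$ pieces (empirical and population) via a three-term triangle inequality, whereas the paper first proves $l_S(\theta)$ is $2\rho$-Lipschitz in $\theta$ and then observes that a cover radius of $\eps/(4\rho)$ forces $|l_S(\theta)-l_S(\theta_j)|\le\eps/2$; the arithmetic is identical.
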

Next lemma is derived from~\citep{hornik1989multilayer}, which shows that neural networks are universal approximators:
\begin{lemma}[\citet{hornik1989multilayer}, approximation]
Let $\eps > 0$. There exists $d \in\Nat$ and a family of neural networks $\FF\defeq \{\hat{r}_\theta: \theta\in\Theta\subseteq\RR^d\}$ where $\Theta$ is compact, such that $\inf_{ \hat{r}_\theta\in\FF}\left|\Exp_{P_{X, Y}}[\log \hat{r}_\theta(x, y)] - I(X; Y)\right|\leq \eps$.
\label{lemma:2}
\end{lemma}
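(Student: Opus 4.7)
The plan is to reduce this statement to a direct application of the classical universal approximation theorem of Hornik et al.\ applied to the PMI function $f^*(x,y) := \log r(x,y)$. The starting point is the identity $I(X;Y) = \Exp_{P_{X,Y}}[\log r(x,y)]$, which lets me rewrite
\begin{equation*}
\left| \Exp_{P_{X,Y}}[\log \hat{r}_\theta(x,y)] - I(X;Y) \right| = \left| \Exp_{P_{X,Y}}[\log \hat{r}_\theta(x,y) - f^*(x,y)] \right| \leq \Exp_{P_{X,Y}}\!\bigl[\,|\log \hat{r}_\theta(x,y) - f^*(x,y)|\,\bigr].
\end{equation*}
So it suffices to exhibit a family $\FF$ for which some $\hat{r}_\theta$ satisfies $\Exp_{P_{X,Y}}[|\log \hat{r}_\theta - f^*|] \leq \eps$; in other words, I only need to approximate $f^*$ well in $L^1(P_{X,Y})$.

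Next, I would invoke Hornik's density theorem: feed-forward networks with a single hidden layer and a non-polynomial activation are dense in $L^p(\mu)$ for any finite Borel measure $\mu$ and any measurable target in $L^p(\mu)$. Applied with $\mu = P_{X,Y}$ and target $f^*$, this yields, for the given $\eps$, a specific width and a specific network $\tilde{f}$ with $\Exp_{P_{X,Y}}[|\tilde{f} - f^*|] \leq \eps$. Parameterizing the log-density-ratio estimator directly as $\log \hat{r}_\theta = \tilde{f}_\theta$ (so that $\hat{r}_\theta = \exp(\tilde{f}_\theta)$) then gives the desired bound. To obtain a \emph{compact} $\Theta$, I would observe that the approximating network $\tilde{f}$ has only finitely many weights, say $d$ of them, taking concrete real values; I can simply take $\Theta$ to be any closed ball in $\RR^d$ that contains this particular weight vector. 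Heine--Borel then gives compactness, and the infimum over $\FF$ is at most the value attained at this weight vector, which is at most $\eps$.

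The main obstacle, and the only nontrivial point, is reconciling compactness of $\Theta$ with approximation quality, together with the integrability of $f^*$ on $P_{X,Y}$. Under Assumption~\ref{assu:bound} the estimators satisfy $C_l \leq \log \hat{r}_\theta \leq C_u$, so it is natural (and essentially free) to likewise assume $f^* \in L^1(P_{X,Y})$ and, if needed, truncate $f^*$ at the clipping levels $[C_l, C_u]$ before approximating. This truncation step is the one place where care is required: one must verify that the truncated target is still approximable by networks whose output lies within the same bounded range (which is standard, e.g.\ by composing a universal approximator with a bounded squashing function). Everything else --- linearity of expectation, Jensen/triangle inequality, Heine--Borel, and the black-box use of Hornik's theorem --- is routine.
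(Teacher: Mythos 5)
The paper does not actually prove this lemma: it is stated with the qualifier ``derived from~\citet{hornik1989multilayer}'' in both the main text and the supplementary restatement, and no derivation is given. So there is nothing in the paper to compare your argument against line by line; instead you have supplied the missing derivation. Your reconstruction is the standard and, as far as I can tell, correct way to obtain it: identify $I(X;Y) = \Exp_{P_{X,Y}}[\log r]$, apply the triangle/Jensen inequality to reduce to $L^1(P_{X,Y})$-approximation of $f^* = \log r$, invoke the $L^p$-density version of the universal approximation theorem, parameterize $\log \hat r_\theta$ directly as the network output, and take $\Theta$ to be a closed Euclidean ball around the achieving weight vector so that Heine--Borel gives compactness while the infimum is still witnessed. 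You also correctly flag the two real subtleties that the paper glosses over: (i) $f^*\in L^1(P_{X,Y})$ (equivalently $I(X;Y)<\infty$) is needed for the target to be approximable, and (ii) reconciling Assumption~\ref{assu:bound} (a uniform bound $[C_l,C_u]$ over all $\theta\in\Theta$) with the approximation step requires first truncating $f^*$ to $[C_l,C_u]$ and then composing the network with a bounded squashing map so that every $\hat r_\theta\in\FF$, not just the chosen one, satisfies the boundedness assumption. One small point worth making explicit: since $C_l, C_u$ are properties of $\FF$ rather than fixed in advance, they must be chosen wide enough (depending on $\eps$) so that the truncation error $|\Exp[f^*_{\mathrm{trunc}}] - I(X;Y)|$ is at most $\eps/2$ before you spend the remaining $\eps/2$ budget on the Hornik approximation; this interacts with the constant $M = C_u - C_l$ appearing in Lemma~\ref{lemma:1}, so the final rate in the main theorem implicitly carries a dependence on $\eps$ through $M$. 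That dependence is hidden in the paper's $O(\cdot)$ and is not a flaw in your argument, but it is worth noting if one wants a fully quantitative statement.
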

Combining both lemmas, we are ready to state the following main result:
\begin{restatable}{theorem}{mainthm}
Let $0 < \delta < 1$. There exists $d \in\Nat$ and a family of neural networks $\FF\defeq \{\hat{r}_\theta: \theta\in\Theta\subseteq\RR^d\}$ where $\Theta$ is compact, so that $\exists \theta^*\in\Theta$, with probability at least $1 - \delta$ over the draw of $S = \{x_i, y_i\}_{i=1}^n\sim P_{X,Y}^{\otimes n}$, 
\begin{equation}
    \left|\widehat{I}_{\theta^*}^{(n)}(X;Y) - I(X; Y)\right| \leq O\left(\sqrt{\frac{d + \log(1/\delta)}{n}}\right).
\end{equation}
\end{restatable}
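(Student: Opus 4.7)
The plan is to combine the two lemmas via the triangle inequality: choose $\theta^*$ so that $\hat{r}_{\theta^*}$ is a good approximator of the true density ratio (approximation error), and then use uniform concentration over the family $\FF$ to control the deviation of the empirical plug-in estimator from its expectation (estimation error). Concretely, for an error budget $\eps>0$ I would split
\begin{equation*}
\bigl|\widehat{I}_{\theta^*}^{(n)}(X;Y) - I(X;Y)\bigr| \leq \bigl|\widehat{I}_{\theta^*}^{(n)}(X;Y) - \Exp_{P_{X,Y}}[\log \hat{r}_{\theta^*}(x,y)]\bigr| + \bigl|\Exp_{P_{X,Y}}[\log \hat{r}_{\theta^*}(x,y)] - I(X;Y)\bigr|,
\end{equation*}
and aim to bound each of the two terms by $\eps/2$.

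First, I would apply Lemma~\ref{lemma:2} (universal approximation) to fix a neural family $\FF$ indexed by a compact $\Theta\subseteq\RR^d$ so that some $\theta^*\in\Theta$ achieves $\bigl|\Exp_{P_{X,Y}}[\log \hat{r}_{\theta^*}(x,y)] - I(X;Y)\bigr|\leq \eps/2$; this handles the approximation term for any desired $\eps$. Next, I would invoke Lemma~\ref{lemma:1} with radius $\eps/2$ applied uniformly over $\FF$, which yields, with failure probability at most $2\covering(\Theta,\eps/(8\rho))\exp(-n\eps^2/(8M^2))$, that the first (estimation) term is also at most $\eps/2$.

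The main computation is then to convert this tail bound into the claimed rate. Since $\Theta$ is compact in $\RR^d$, a standard volume argument gives $\covering(\Theta,\eps/(8\rho)) \leq (C_0/\eps)^d$ for some constant $C_0$ depending on $\rho$ and the diameter of $\Theta$. Setting the bound equal to $\delta$ and solving yields
\begin{equation*}
\eps^2 \;\geq\; \frac{8M^2}{n}\Bigl(d\log(C_0/\eps) + \log(2/\delta)\Bigr),
\end{equation*}
which by the usual self-bounding trick (substituting a crude polynomial upper bound for $1/\eps$ inside the logarithm) gives $\eps = O\bigl(\sqrt{(d + \log(1/\delta))/n}\bigr)$, up to logarithmic factors in $n$ that are absorbed in the big-$O$. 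Combining with the approximation step completes the bound.

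The main obstacle I expect is the interplay between the two sources of error: the approximation lemma only tells us that \emph{some} $\FF$ suffices, but Lemma~\ref{lemma:1} requires uniform control over that same $\FF$, so the dimension $d$ on the right-hand side is precisely the dimension of whichever $\FF$ we fixed in the approximation step. Making this consistent (so that the $d$ appearing in the rate is the dimension of the network family that achieves $\eps/2$ approximation) is the subtle point; once it is pinned down, the remaining work is the routine covering-number and self-bounding calculation sketched above.
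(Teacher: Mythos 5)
Your proposal follows essentially the same route as the paper: triangle inequality split into approximation (Lemma~\ref{lemma:2}, choose $\FF$ and $\theta^*$ to approximate $I(X;Y)$ within $\eps/2$) plus estimation (Lemma~\ref{lemma:1} applied uniformly over that same $\FF$ at radius $\eps/2$), then plugging in $\log\covering(\Theta,\eps/8\rho)=O(d\log(\rho/\eps))$ and solving the resulting tail inequality for $\eps$. The paper handles the $\log(1/\eps)$ term by linearizing it with $\log x \leq x-1$ to claim a clean $O(\sqrt{(d+\log(1/\delta))/n})$ rate, whereas your more cautious framing (absorbing possible $\log n$ factors) is if anything more careful, so there is no gap.
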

It is worth pointing out that the above theorem is a theorem of existence, but \emph{not} a constructive theorem, meaning that it does not give an estimator explicitly. To sum up, it shows that there exists a neural network $\theta^*$ such that, w.h.p., $\widehat{I}_{\theta^*}^{(n)}(X;Y)$ can approximate $I(X;Y)$ with $n$ samples at a rate of $O(1/\sqrt{n})$.

\section{Application II: Self-supervised Representation Learning}
\label{sec:SSRL}

Self-supervised representation learning aims at extracting task-relevant information without access to label or downstream signals. Among different self-supervised representation learning techniques, {\em contrastive learning} may be the most popular one with empirical~\cite{agrawal2015learning,arandjelovic2017look,jayaraman2015learning,oord2018representation,bachman2019learning,chen2020simple,tian2019contrastive,hjelm2018learning,he2019momentum,kong2019mutual,ozair2019wasserstein,henaff2019data} and theoretical~\cite{arora2019theoretical,tsai2020demystifying} support. The core of contrastive learning is having the representations sampled from similar pairs be differentiated from random pairs. In other words, we hope that the representations learned from the similar pairs have higher point-wise dependency than the random pairs. Let $v_1$/$v_2$ denote two different views for the same data,  $v_2'$ represent a view from a different data, and $F$/$G$ be two mapping functions from data to representations. In short, contrastive learning objective learns $F$/$G$ such that $r(F(v_1),G(v_2))$ is much larger than $r(F(v_1),G(v_2'))$.

\paragraph{Connection between Contrastive Learning and PD}  Our goal is to show that our learning objectives resemble contrastive learning. We first take the {\em Probabilistic Classifier} approach as an example and incorporate the learning of $F/G$, which we name it as {\em Probabilistic Classifier Coding} (PCC):
\begin{equation}
\small
\underset{F,G}{\rm sup}\,\,\underset{\theta \in \Theta}{\rm sup}\,\,  \mathbb{E}_{P_{\mathcal{V}_1,\mathcal{V}_2}}[{\rm log}\,\hat{p}_\theta (c=1|(F(v_1),G(v_2)))]  + \mathbb{E}_{P_{\mathcal{V}_1}P_{\mathcal{V}_2}}[{\rm log}\,\Big(1-\hat{p}_\theta (c=1|(F(v_1),G(v_2')))\Big)],
\label{eq:PCC}
\end{equation}
which aims at learning $F/G$ to better classify (i.e., differentiate) between similar or random data pairs. Next, we consider the {\em Density-Ratio Fitting} approach, which we refer to the objective as {\em Density-Ratio Fitting Coding} (D-RFC):
\begin{equation}
\small
\underset{F,G}{\rm sup}\,\,\underset{\theta \in \Theta}{\rm sup}\,\,\mathbb{E}_{P_{\mathcal{V}_1,\mathcal{V}_2}}[\hat{r}_\theta(F(v_1),G(v_2))] - \frac{1}{2}\mathbb{E}_{P_{\mathcal{V}_1} P_{\mathcal{V}_2}}[\hat{r}_\theta^2(F(v_1),G(v_2'))],
\label{eq:DRFC}
\end{equation}
which aims at learning $F$/$G$ to maximize $\hat{r}_\theta(F(v_1),G(v_2))$ and minimize $\hat{r}_\theta(F(v_1),G(v_2'))$. We leave the discussion for the adaptations of Variational MI Bounds, Density Matching I ,and Density Matching II in Supplementary.

\paragraph{Baseline Model} The most adopted contrastive representation learning objective is Contrastive Predictive Coding (CPC)~\cite{oord2018representation}:
\begin{equation*}
\begin{split}
 \underset{F,G}{\rm sup}\,\,\underset{\theta \in \Theta}{\rm sup}\,\,  \mathbb{E}_{(v_1^1,v_2^1)\sim P_{\mathcal{V}_1,\mathcal{V}_2}, \cdots (v_1^n,v_2^n)\sim P_{\mathcal{V}_1,\mathcal{V}_2}}[\frac{1}{n}\sum_{i=1}^{n}{\rm log}\,\frac{e^{\hat{c}_\theta(F(v_1^i), G(v_2^i))}}{\frac{1}{n}\sum_{j=1}^{n}e^{\hat{c}_\theta(F(v_1^i), G(v_2^j))}}],
\end{split}
\end{equation*} 
where $\{v_1^i, v_2^i\}_{i=1}^n$ are independently and identically sampled from $P_{\mathcal{V}_1,\mathcal{V}_2}$. $\hat{c}_\theta (\cdot)$ is a function that takes the representations learned from the data pairs and returns a scalar.

\begin{figure}[t]
  \centering
  \includegraphics[width=\linewidth]{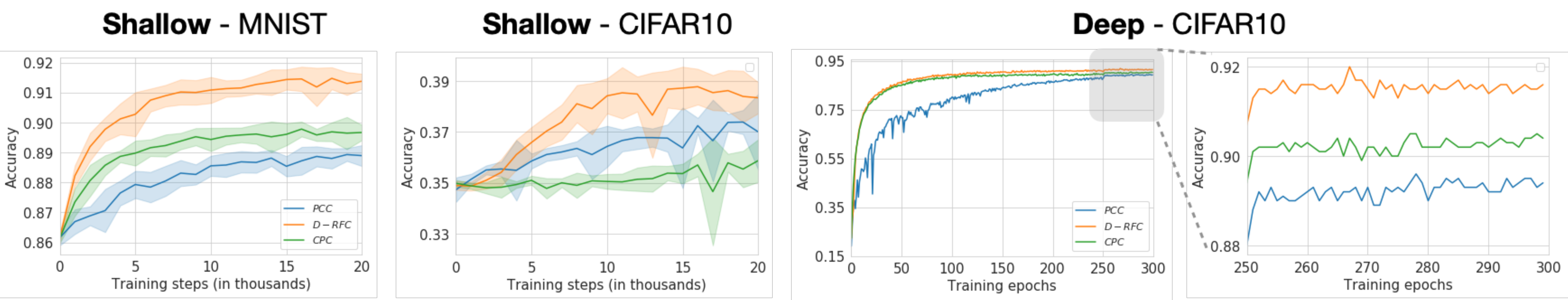}
  \vspace{-5mm}
\caption{\small  {\bf Shallow}~\cite{tschannen2019mutual} and {\bf Deep}~\cite{bachman2019learning} task for self-supervised visual representation learning using {\em downstream linear evaluation protocol}. 
We compare the presented Probabilistic Classifier Coding (PCC) and Density-Ratio Fitting Coding (D-RFC) with baseline Contrastive Predictive Coding (CPC). Network, learning rate, optimizer, and batch size are fixed for all the methods. The only differences are the learning objectives.
}
\label{fig:RL}
\vspace{-3mm}
\end{figure}

\paragraph{Experimental Setup} We compare our proposed approaches with CPC~\cite{oord2018representation} on two tasks~\cite{tschannen2019mutual,bachman2019learning}. Due to the fact that the performance of the self-supervisedly learned representations strongly depends on the choice of feature extractor architectures and the parametrization of the employed MI estimators~\cite{tschannen2019mutual}. For a fair comparison, we fix the network, learning rate, optimizer, and batch size when comparing between different objectives. In the first set of experiments, we choose a relatively shallow network as suggested by Tschannen {\em et al.}~\cite{tschannen2019mutual}, performing self-supervised learning experiments on MNIST~\cite{lecun1998gradient} and CIFAR10~\cite{krizhevsky2009learning}. We report the average and standard deviations from $10$ random trials. This task is referred to as {\bf shallow} experiment. In the second set of experiments, we choose a relatively deep network as suggested by Bachman {\em et al.}~\cite{bachman2019learning}, performing experiments on CIFAR10. This task is referred to as {\bf deep} experiment. Both the {\bf shallow} and {\bf deep} tasks perform representation learning without access to the label information, and then the performance is evaluated by {\em downstream linear evaluation protocol}~\cite{oord2018representation,henaff2019data,tian2019contrastive,hjelm2018learning,bachman2019learning,tschannen2019mutual,kolesnikov2019revisiting}. Specifically, a linear classifier is trained from the self-supervisedly learned (fixed) representation to the labels on the training set. We present the results with convergence in Figure~\ref{fig:RL}. One may see Supplementary for more details.

\paragraph{Results \& Discussions} Prior approaches~\cite{tschannen2019mutual,poole2019variational,song2019understanding,ozair2019wasserstein} contend that a valid MI lower bound or an objective with better MI estimation may not result in better representations. We have a similar observation that D-RFC performs the best (when comparing to CPC and PCC) while it is neither a lower bound of MI nor the best objective of MI estimation. Next, we see an inconsistent trend when comparing PCC to CPC. In the Shallow task on CIFAR10, PCC performs better than CPC, while it performs worse on the other experiments. To sum up, we show our PD estimation objectives can be used for self-supervised representation learning, which is either at par or better than prior approaches.

\section{Application III: Cross-modal Learning}

\begin{table}[t!]
\caption{\small 
  Cross-modal Retrieval task with  unsupervised word features across acoustic and textual modalities. {\em Probabilistic Classifier} approach is used to estimate PD between the audio and textual features of a given word. The estimator is trained on the training split. We report the $1:5$ matching results from audio to textual features on the test split, where we obtain 96.24\% top-1 retrieval accuracy. 
}
\vspace{-1mm}
\centering
\scalebox{0.77}{
\begin{tabular}{c|ccccc}
\toprule
\multicolumn{6}{c}{Correct Audio-Textual Retrieval Examples (Top-1 Accuracy: 96.24\%)}   \\ 
\midrule \midrule Audio Feature & \multicolumn{5}{c}{Textual Features (Ranked by logarithm of point-wise dependency)}                                                                      \\ \midrule
depths        & \textbf{depths (15.22)}     & mildewed (-58.62)          & lugged (-92.24)   & alison (-108.02)   & raffleshurst (-161.74) \\
receptacle    & \textbf{receptacle (1.32)} & bloated (-15.41)           & recreate (-39.77)  & sting (-90.51)     & pity (-104.44)         \\
frontiers     & \textbf{frontiers (3.36)}   & institution (-31.01)       & laterally (-54.17) & pretends (-105.11) & vibrating (-124.88)    \\ \bottomrule
\multicolumn{6}{c}{} \\
\toprule
\multicolumn{6}{c}{Incorrect Audio-Textual Retrieval Examples}   \\ 
\midrule \midrule Audio Feature & \multicolumn{5}{c}{Textual Features (Ranked by logarithm of point-wise dependency)}                                                                      \\ \midrule
cos           & tortoise (-2.33)            & \textbf{cos (-10.72)}      & tickling (-12.53)  & undressed (-18.11) & cromwell's (-44.31)    \\
elbowing      & itinerary (-6.51)           & \textbf{elbowing (-8.22)} & swims (-12.98)     & rigid (-24.14)     & integrity (-39.76) \\
alma's      & roughness (-3.11)           & \textbf{alma's (-3.67)} & montreal (-11.81)     & tuneful (-12.22)     & levant (-18.26) \\
\bottomrule
\end{tabular}}
\label{tbl:CM}
\end{table}

In this section, we discuss the usage of point-wise dependency (PD) estimation for data containing information across modalities - audio and text.

\paragraph{Experimental Setup - Cross-modal Retrieval} We instantiate the discussion using unsupervised word features\footnote{The word features can be downloaded from~\url{https://github.com/iamyuanchung/speech2vec-pretrained-vectors}.} which are learned from text corpora (i.e., Word2Vec~\cite{mikolov2013efficient} method) and human speech (i.e., Speech2Vec~\cite{chung2018speech2vec} method). In particular, in this dataset, a word feature has two distinct features: audio and textual feature. We denote $\mathcal{X}$ as the audio sample space and $\mathcal{Y}$ as the textual sample space. Since our goal is not comparing between different approaches but presenting the usage of PD estimation for cross-modal learning, we select only one approach {\em Probabilistic Classifier} as our objective for estimating PD. Note that we report the logarithm of PD, which is PMI in the results. One may refer to Supplementary for more details on training and datasets.

By definition, given an audio feature $x$ and a textual feature $y$, their point-wise dependency $r(x,y)$ measures their statistical dependency. For example, if $x_1$ and $y_1$ are the features for the same word, and $y_2$ is the feature for another word, then $r(x_1, y_1) > r(x_1, y_2)$ (in most cases). As a consequence, we can train PD estimators using the training split, and computing PD values for cross-modal retrieval on the test split.

\paragraph{Results \& Discussions} In Table~\ref{tbl:CM}, we report the results on $1:5$ matching\footnote{
One trial contains an audio feature, its corresponding textual feature, and $4$ randomly sampled textual features.} from audio to textual features. 
First, we obtain 96.24\% top-1 retrieval accuracy using PD estimation (with {\em Probabilistic Classifier} approach). Another approach such as {\em Density-Ratio Fitting} obtains 92.26\% top-1 retrieval accuracy. Then, we study the success and failure retrieval cases. The success examples show the highest statistical dependency (i.e., the highest PMI) between the audio and textual features of the same word. The failure examples, on the contrary, (all of them) have the second-highest PMI between the audio and textual features of the same word. Last, we observe that only the correctly retrieved cross-modal features have positive PMI values, which suggest two features are statistically dependent. As a summary, PD acts as a statistical dependency measurement, and we show its estimation can be generalized from training to test split for cross-modal retrieval.

\section{Conclusion}

In contrast to mutual information, which is an aggregate statistic of the dependency between two random variables, this paper contributes to present methods for estimating instance-level dependency. To overcome the curse of dimensionality in classical kernel-based approaches, we leverage the power of rich and flexible neural networks to model high-dimensional data. In particular, we first show that point-wise dependency is a natural product from optimizing mutual information variational bounds. Then, we further develop two point-wise dependency estimation approaches: Probabilistic Classifier and Density-Ratio Fitting that are free of optimizing mutual information variational bounds. A diversified set of experiments manifest the advantages of using our approaches. We believe this work sheds light on the advantages of estimating instance-level dependency between high-dimensional data, making a step forward towards improving unsupervised or cross-modal representation learning.

\section*{Broader Impact}
This paper presents methods for estimating point-wise dependency between high-dimensional data using neural networks. This work may benefit the applications that require understanding instance-level dependency. Take adversarial samples detection as an example: we can perform point-wise dependency estimation between data and label, and the ones with low point-wise dependency can be regarded as adversarial samples. We should also be aware of the malicious usage for our framework. For instance, people with bad intentions can use our framework to detect samples that have a high point-wise dependency with their of-interest private attributes. Then, these detected samples may be used for malicious purposes. 

\section*{Acknowledgement}
This work was supported in part by the Facebook Fellowship, DARPA grants FA875018C0150 
HR00111990016, NSF IIS1763562, NSF Awards \#1750439 \#1722822, National Institutes of Health, and Apple.
We would also like to acknowledge NVIDIA’s GPU support.

{
\small
\bibliography{density_ratio}
\bibliographystyle{plainnat}
}

\newpage

\section{Optimization Objectives for Point-wise Dependency Neural Estimation}

In this section, we shall show detailed derivations for the point-wise dependency estimation methods. Four approaches are discussed: {\em Variational Bounds of Mutual Information}, {\em Density Matching}, {\em Probabilistic Classifier}, and {\em Density-Ratio Fitting}. For convenience, we define $\Omega = \mathcal{X}\times \mathcal{Y}$. We have $P_{X,Y}$ and $P_X P_Y$ (can also be written as $P_X \otimes P_Y$) be the probability measures over $\sigma-$algebras over $\Omega$ with their probability densities being the Radon-Nikodym derivatives (i.e., $p(x,y) = dP_{X,Y}/d\mu$ and $p(x)p(y) = dP_X P_Y/d\mu$ with $\mu$ being the Lebesgue measure).

\subsection{Method I: Variational Bounds of Mutual Information}

Recent advances~\cite{belghazi2018mine,poole2019variational} propose to estimate mutual information (MI) using neural network either from variational MI lower bounds (e.g., $I_{\rm NWJ}$~\cite{belghazi2018mine} and $I_{\rm DV}$~\cite{belghazi2018mine}) or a variational form of MI (e.g., $I_{\rm JS}$~\cite{poole2019variational}). These estimators have the logarithm of point-wise dependency (PMI) as the intermediate product, which we will show in the following. We denote $\mathcal{M}$ be any class of functions $m:\Omega \rightarrow \mathbb{R}$.

\begin{prop}[$I_{\rm NWJ}$ and its neural estimation, restating Nguyen-Wainwright-Jordan bound~\cite{nguyen2010estimating,belghazi2018mine}] 
\begin{equation*}
I_{\rm NWJ}:=\underset{m \in \mathcal{M}}{\rm sup}\,\mathbb{E}_{P_{X,Y}}[m(x,y)] - e^{-1}\mathbb{E}_{P_{X}P_{Y}}[e^{m(x,y)}] = \underset{\theta \in \Theta}{\rm sup}\,\mathbb{E}_{P_{X,Y}}[\hat{f}_\theta (x,y)] - e^{-1}\mathbb{E}_{P_{X}P_{Y}}[e^{\hat{f}_\theta (x,y)}]
\end{equation*}
has the optimal function $m^*(x,y) = 1+{\rm log}\,\frac{p(x,y)}{p(x)p(y)}$. And when $\Theta$ is large enough, the optimal $\hat{f}^*_\theta(x,y) =  1+{\rm log}\,\frac{p(x,y)}{p(x)p(y)}$.
\label{prop:nwj}
\end{prop}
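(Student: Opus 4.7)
The plan is to prove the proposition by standard variational/functional-derivative arguments, then invoke universal approximation for the neural-network version.

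First I would fix the functional objective
\begin{equation*}
J(m) := \int_\Omega p(x,y)\, m(x,y)\, d\mu - e^{-1}\int_\Omega p(x)p(y)\, e^{m(x,y)}\, d\mu
\end{equation*}
and take its functional derivative pointwise. Perturbing $m(x,y)\mapsto m(x,y)+\eps\,\varphi(x,y)$ for an arbitrary test function $\varphi$, the first-order condition $\left.\tfrac{d}{d\eps}J(m+\eps\varphi)\right|_{\eps=0}=0$ gives
\begin{equation*}
\int_\Omega \bigl[\,p(x,y) - e^{-1}p(x)p(y)\,e^{m(x,y)}\bigr]\varphi(x,y)\, d\mu = 0 \quad \forall\varphi,
\end{equation*}
which forces the bracketed quantity to vanish $\mu$-a.e., i.e.\ $e^{m^*(x,y)} = e\cdot p(x,y)/(p(x)p(y))$. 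Taking $\log$ yields $m^*(x,y) = 1 + \log\bigl(p(x,y)/p(x)p(y)\bigr)$.

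Second, I would verify this critical point is indeed a maximizer rather than a saddle/minimizer. The second variation is
\begin{equation*}
\left.\tfrac{d^2}{d\eps^2}J(m+\eps\varphi)\right|_{\eps=0} = -e^{-1}\int_\Omega p(x)p(y)\,e^{m(x,y)}\,\varphi(x,y)^2\, d\mu \leq 0,
\end{equation*}
so $J$ is concave in $m$ and $m^*$ is a global maximum. Substituting $m^*$ back gives the familiar identity $J(m^*) = I(X;Y)$, which is the standard Nguyen--Wainwright--Jordan identity that justifies the name of the bound. For any other $m\in\mathcal M$, the inequality $J(m)\le I(X;Y)$ holds, so $I_{\rm NWJ}\le I(X;Y)$ with equality attained at $m^*$; this last observation is not needed for the statement of the proposition but is worth noting in passing.

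Third, for the neural-network version I would argue that when $\mathcal F=\{\hat f_\theta:\theta\in\Theta\}$ is rich enough (e.g.\ Theta compact, activations satisfying the hypotheses of Hornik et al.), the universal approximation lemma (Lemma~\ref{lemma:2} in the main text, suitably adapted) implies that $m^*$ can be approximated arbitrarily well in the relevant norm by some $\hat f_\theta$. Since $J$ is continuous in $m$ under that norm and the supremum over $\mathcal M$ is attained at $m^*$, the supremum over $\Theta$ coincides (in the limit of sufficient capacity) and the optimizer is $\hat f^*_\theta(x,y) = 1+\log\bigl(p(x,y)/p(x)p(y)\bigr)$. The main (and essentially only) subtlety is justifying the exchange of supremum and approximation; this is mild because $J$ is concave and continuous in $m$, and both integrals are well-defined under the boundedness Assumption~\ref{assu:bound}. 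No step is genuinely hard: the proposition is essentially a direct functional-calculus computation plus a standard appeal to approximation.
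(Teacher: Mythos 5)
Your proof is correct and follows essentially the same route as the paper: set the first-order functional derivative to zero to obtain $m^*(x,y)=1+\log\bigl(p(x,y)/p(x)p(y)\bigr)$, confirm concavity via the sign of the second variation, and invoke universal approximation (Hornik et al.) to transfer the conclusion to the neural-network parametrization. The only cosmetic difference is ordering—the paper checks the second-order derivative before the first—so there is nothing substantive to flag.
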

\begin{proof} The second-order functional derivative of the objective is $-e^{-1}\cdot e^{m(x,y)}\cdot dP_{X}P_{Y}$, which is always negative. The negative second-order functional derivative implies the objective has a supreme value. 
Then, take the first-order functional derivative $\frac{\partial I_{\rm NWJ}}{\partial m}$ and set it to zero:
\begin{equation*}
d P_{X,Y} - e^{-1} \cdot e^{m(x,y)}\cdot d P_{X}P_{Y} = 0.
\end{equation*}
We then get optimal $m^*(x,y) = 1+{\rm log}\,\frac{dP_{X,Y}}{dP_{X}P_{Y}} = 1+{\rm log}\,\frac{p(x,y)}{p(x)p(y)}$.
When $\Theta$ is large enough, by universal approximation theorem of neural networks~\cite{hornik1989multilayer}, the approximation in Proposition~\ref{prop:nwj} is tight, which means $\hat{f}^*_\theta(x,y) = m^*(x,y) = 1+{\rm log}\,\frac{p(x,y)}{p(x)p(y)}$. 
\end{proof}

\begin{prop}[$I_{\rm DV}$ and its neural estimation, restating Donsker-Varadhan bound~\cite{donsker1983asymptotic,belghazi2018mine}] 
\begin{equation*}
I_{\rm DV}:=\underset{m \in \mathcal{M}}{\rm sup}\,\mathbb{E}_{P_{X,Y}}[m(x,y)] - {\rm log}\,\Big(\mathbb{E}_{P_{X}P_{Y}}[e^{m(x,y)}]\Big)] = \underset{\theta \in \Theta}{\rm sup}\,\mathbb{E}_{P_{X,Y}}[\hat{f}_\theta(x,y)] - {\rm log}\,\Big(\mathbb{E}_{P_{X}P_{Y}}[e^{\hat{f}_\theta(x,y)}]\Big)]
\end{equation*}
has optimal functions $m^*(x,y) = {\rm log}\,\frac{p(x,y)}{p(x)p(y)} + {\rm Const.}$. And when $\Theta$ is large enough, the optimal $\hat{f}^*_\theta(x,y) =  {\rm log}\,\frac{p(x,y)}{p(x)p(y)} + {\rm Const.}$.
\label{prop:dv}
\end{prop}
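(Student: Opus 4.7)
The plan is to mirror the proof of Proposition~\ref{prop:nwj}: compute the first-order functional derivative of the DV objective, solve the resulting stationarity condition to identify $m^*$, certify that it is a maximum via a concavity/second-order argument, and then invoke the universal approximation theorem to transfer the conclusion to the neural family $\{\hat{f}_\theta\}$.

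Concretely, I would write the objective as
\begin{equation*}
L(m) \;=\; \int m(x,y)\, dP_{X,Y} \;-\; \log \int e^{m(x,y)}\, dP_X P_Y,
\end{equation*}
and take the Gateaux derivative in $m$ to obtain the pointwise stationarity condition
\begin{equation*}
dP_{X,Y}(x,y) \;-\; \frac{e^{m(x,y)}}{\mathbb{E}_{P_X P_Y}[e^{m}]}\, dP_X P_Y(x,y) \;=\; 0.
\end{equation*}
Writing $Z \defeq \mathbb{E}_{P_X P_Y}[e^{m^*}]$ and solving gives $e^{m^*(x,y)} = Z \cdot p(x,y)/\bigl(p(x)p(y)\bigr)$, so $m^*(x,y) = \log\bigl(p(x,y)/p(x)p(y)\bigr) + \log Z$, which is the stated form with additive constant $\log Z$. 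One sanity check I would include: plugging this $m^*$ back in yields $\mathbb{E}_{P_X P_Y}[e^{m^*}] = Z$, consistent for \emph{any} positive $Z$, which is the first hint of the non-uniqueness to be addressed next.

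The main conceptual obstacle relative to the $I_{\rm NWJ}$ case is exactly this $+\text{Const.}$ indeterminacy. The DV objective is shift-invariant: $L(m+c) = L(m)$ for every constant $c$, because $\mathbb{E}_{P_{X,Y}}[m+c] = \mathbb{E}_{P_{X,Y}}[m] + c$ and $\log \mathbb{E}_{P_X P_Y}[e^{m+c}] = c + \log \mathbb{E}_{P_X P_Y}[e^{m}]$ cancel term-by-term. This is precisely why $m^*$ is only unique up to an additive constant, and it needs to be flagged explicitly (it is the one place where the argument diverges from Proposition~\ref{prop:nwj}, where the $-e^{-1} \mathbb{E}[e^m]$ penalty breaks the scale symmetry and pins down the constant). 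To certify the stationary point is a global maximizer (rather than a saddle on the quotient space modulo constants), I would observe that $m \mapsto \log \mathbb{E}_{P_X P_Y}[e^{m}]$ is the cumulant generating functional and hence convex in $m$, so $L$ is linear minus convex, i.e.\ concave; the critical point is therefore a global maximizer within each coset $\{m + c : c \in \mathbb{R}\}$.

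Finally, to pass from $m^* \in \mathcal{M}$ to $\hat{f}^*_\theta$, I would invoke the universal approximation theorem~\cite{hornik1989multilayer} exactly as in Proposition~\ref{prop:nwj}: when $\Theta$ is sufficiently rich, the family $\{\hat{f}_\theta\}$ can approximate the bounded log density ratio arbitrarily well, and combined with continuity of $L$ in $m$ this forces the neural optimum to agree with $m^*$ up to the same additive constant, yielding $\hat{f}^*_\theta(x,y) = \log\bigl(p(x,y)/p(x)p(y)\bigr) + \text{Const.}$ as claimed.
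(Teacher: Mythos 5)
Your proposal is correct and follows the same primary route as the paper: take the first-order functional derivative of the DV objective, solve the stationarity condition $dP_{X,Y} - \frac{e^{m}}{\mathbb{E}_{P_XP_Y}[e^{m}]}\,dP_XP_Y = 0$ to read off $m^*(x,y) = \log\frac{p(x,y)}{p(x)p(y)} + {\rm Const.}$, and invoke universal approximation to transfer this to $\hat{f}^*_\theta$. Where you diverge is in certifying the maximum: the paper computes a pointwise second-order functional derivative and argues it is negative, whereas you instead observe that $m \mapsto \log\mathbb{E}_{P_XP_Y}[e^{m}]$ is a cumulant generating functional and hence convex, so the whole objective (linear minus convex) is concave; this is cleaner and fully rigorous without the somewhat informal indicator-function calculation the paper writes down. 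You also explicitly flag the shift invariance $L(m+c)=L(m)$, which gives a more structural explanation of the ``$+{\rm Const.}$'' indeterminacy than simply noting that the stationarity condition only pins down $e^{m^*}$ up to a multiplicative normalization $Z$; both observations make the same point, but yours makes it more transparent why the DV case differs from $I_{\rm NWJ}$, where the $-e^{-1}\mathbb{E}[e^{m}]$ term breaks the symmetry and fixes the constant.
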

\begin{proof} Let ${\bf 1}_{\cdot }$ be an indicator function, and the second-order functional derivative of the objective is
\begin{equation*}
-\frac{e^{m(x,y)}\cdot \mathbb{E}_{(x',y')\sim P_XP_Y}\Bigg[e^{m(x',y')}\cdot {\bf 1}_{(x',y')\neq (x,y)}\Bigg]}{\Big(\mathbb{E}_{P_XP_Y}[e^{m(x,y)}]\Big)^2}\cdot dP_XP_Y,
\end{equation*}
which is always negative. The negative second-order functional derivative implies the objective has a supreme value. Then, take the first-order functional derivative $\frac{\partial I_{\rm DV}}{\partial m}$ and set it to zero:
\begin{equation*}
d P_{X,Y} - \frac{e^{m(x,y)}}{\mathbb{E}_{P_XP_Y}[e^{m(x,y)}]} \cdot d P_{X}P_{Y} = 0.
\end{equation*}
We then have $m^*(x,y)$ take the forms $m^*(x,y) = {\rm log}\,\frac{dP_{X,Y}}{dP_{X}P_{Y}} + {\rm Const.} = {\rm log}\,\frac{p(x,y)}{p(x)p(y)} + {\rm Const.}$.
When $\Theta$ is large enough, by universal approximation theorem of neural networks~\cite{hornik1989multilayer}, the approximation in Proposition~\ref{prop:dv} is tight, which means $\hat{f}^*_\theta(x,y) = m^*(x,y) = {\rm log}\,\frac{p(x,y)}{p(x)p(y)} + {\rm Const.}$. 
\end{proof}

\begin{prop}[$I_{\rm JS}$ and its neural estimation, restating Jensen-Shannon bound with f-GAN objective~\cite{poole2019variational}]
\begin{equation*}
\begin{split}
    I_{\rm JS}:=& \underset{m \in \mathcal{M}}{\rm sup}
\,\mathbb{E}_{P_{X,Y}}\Big[-{\rm softplus} \Big(-{m}(x,y)\Big)\Big] - \mathbb{E}_{P_{X} P_{Y}}\Big[{\rm softplus} \Big({m}(x,y)\Big)\Big]  
    \\
    & = \underset{\theta \in \Theta}{\rm sup}
\,\mathbb{E}_{P_{X,Y}}\Big[-{\rm softplus} \Big(-\hat{f}_\theta(x,y)\Big)\Big] - \mathbb{E}_{P_{X} P_{Y}}\Big[{\rm softplus} \Big(\hat{f}_\theta(x,y)\Big)\Big]
\end{split}
\end{equation*}
with ${\rm softplus}$ function being ${\rm softplus}\,(x) = {\rm log}\,\Big(1+{\rm exp}\,(x)\Big)$ and the optimal solution $m^*(x,y) = {\rm log}\,\frac{p(x,y)}{p(x)p(y)}$. And when $\Theta$ is large enough, the optimal $\hat{f}^*_\theta(x,y) = m^*(x,y) =  {\rm log}\,\frac{p(x,y)}{p(x)p(y)}$.
\label{prop:js}
\end{prop}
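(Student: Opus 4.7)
My plan is to mirror the functional-calculus strategy already used for Propositions~\ref{prop:nwj} and~\ref{prop:dv}. Specifically, I will treat the supremum over $m\in\mathcal{M}$ as a pointwise optimization, compute the first-order functional derivative $\frac{\partial I_{\rm JS}}{\partial m}$, solve the resulting stationarity condition for $m^*$, and verify concavity via the second-order derivative. The parametric claim about $\hat f_\theta^*$ will then follow by the same appeal to the universal approximation theorem~\cite{hornik1989multilayer} that closes the previous two proofs.

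The key computation I would carry out first is the functional derivative. Using $\frac{d}{du}{\rm softplus}(u) = \sigma(u)\defeq 1/(1+e^{-u})$ together with the chain rule on the composition $-{\rm softplus}(-m)$, I expect to obtain
\begin{equation*}
\frac{\partial I_{\rm JS}}{\partial m} \;=\; \sigma(-m(x,y))\,dP_{X,Y} \;-\; \sigma(m(x,y))\,dP_X P_Y .
\end{equation*}
Setting this to zero and invoking the elementary identity $\sigma(-m)/\sigma(m) = e^{-m}$, the first-order condition collapses to $e^{-m^*(x,y)} = p(x)p(y)/p(x,y)$, yielding $m^*(x,y) = \log\frac{p(x,y)}{p(x)p(y)}$, exactly as claimed.

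To certify that this stationary point is the maximum (and not a saddle or minimum), I would compute the second-order functional derivative. Because $\sigma'(u) = \sigma(u)\sigma(-u) > 0$, differentiating once more produces $-\sigma(m)\sigma(-m)\,(dP_{X,Y} + dP_X P_Y)$, which is strictly negative, so the functional is strictly concave in $m$ and the critical point is its unique global maximizer. The parametric statement then follows by taking $\Theta$ rich enough for $\{\hat f_\theta\}$ to approximate $m^*$ uniformly, forcing $\hat f_\theta^* = m^*$ in the limit.

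The only delicate part I foresee is keeping signs straight when differentiating the composition $-{\rm softplus}(-m)$; the rest is direct calculus that essentially transcribes the templates of Propositions~\ref{prop:nwj} and~\ref{prop:dv}. An attractive alternative route, which would sidestep this sign bookkeeping entirely, is to first rewrite $-{\rm softplus}(-m) = \log\sigma(m)$ and ${\rm softplus}(m) = -\log(1-\sigma(m))$, thereby exposing $I_{\rm JS}$ as the standard GAN discriminator objective with label $C=1$ for $P_{X,Y}$ and $C=0$ for $P_X P_Y$; the optimal Bayes discriminator $\sigma(m^*) = p(x,y)/(p(x,y)+p(x)p(y))$ then immediately gives $m^*(x,y) = \log\frac{p(x,y)}{p(x)p(y)}$ with no derivative calculation.
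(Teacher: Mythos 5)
Your functional-derivative computation is correct and is essentially identical to the paper's own proof: the paper likewise sets $\frac{1}{1+e^{-m}}e^{-m}\,dP_{X,Y}-\frac{1}{1+e^m}e^m\,dP_XP_Y=0$ to get $m^*=\log\frac{p(x,y)}{p(x)p(y)}$, checks negativity of the second-order derivative (which, after simplification, is exactly your $-\sigma(m)\sigma(-m)(dP_{X,Y}+dP_XP_Y)$), and closes with universal approximation. The alternative GAN-discriminator rewrite you sketch is a valid shortcut but is not the route the paper takes.
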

\begin{proof} The second-order functional derivative of the objective is
\begin{equation*}
-\frac{1}{\Big(1+e^{m(x,y)}\Big)^2}\cdot e^{m(x,y)} \cdot dP_{X,Y} - \frac{1}{\Big(1+e^{-m(x,y)}\Big)^2}\cdot e^{-m(x,y)} \cdot dP_XP_Y,
\end{equation*}
which is always negative. The negative second-order functional derivative implies the objective has a supreme value. 
Then, take the first-order functional derivative $\frac{\partial I_{\rm JS}}{\partial m}$ and set it to zero:
\begin{equation*}
\frac{1}{1+e^{-m(x,y)}}\cdot e^{-m(x,y)} \cdot dP_{X,Y} - \frac{1}{1+e^{m(x,y)}}\cdot e^{m(x,y)} \cdot dP_XP_Y =0.
\end{equation*}
We then get $m^*(x,y) = {\rm log}\,\frac{dP_{X,Y}}{dP_{X}P_{Y}} = {\rm log}\,\frac{p(x,y)}{p(x)p(y)}$. 
When $\Theta$ is large enough, by universal approximation theorem of neural networks~\cite{hornik1989multilayer}, the approximation in Proposition~\ref{prop:js} is tight, which means $\hat{f}^*_\theta(x,y) = m^*(x,y) = {\rm log}\,\frac{p(x,y)}{p(x)p(y)}$. 
\end{proof}

We see that either $I_{\rm NWJ}$ (Proposition~\ref{prop:nwj}) or $I_{\rm JS}$ (Proposition~\ref{prop:js}) gives us the optimal PMI estimation, while $I_{\rm DV}$ (Proposition~\ref{prop:dv}) is less preferable since its optimal solution includes an arbitrary constant. In practice, we prefer $I_{\rm JS}$ over $I_{\rm NWJ}$/$I_{\rm DV}$ due to its better training stability~\cite{poole2019variational}.

\subsection{Method II: Density Matching}

This method considers to match the true joint density $p(x,y)$ and the estimated joint density via KL-divergence. We let the estimated joint probability be $P_{m,X,Y}$ with its joint density being $e^{m(x,y)}p(x)p(y)$, where $e^{m(x,y)}$ acts to ensure the estimated joint density is a valid probability density function. Hence, we let $m\in \mathcal{M}''$ with $\mathcal{M}''$ being 1) any class of functions $m: \Omega \rightarrow \mathbb{R}$; and 2) $\int e^{m(x,y)}\,dP_XP_Y = \mathbb{E}_{P_XP_Y}[e^{m(x,y)}] = 1$. 
\begin{prop}[KL Loss in Density Matching and its neural estimation]
\begin{equation*}
\begin{split}
L_{\rm KL_{DM}} := & \underset{m\in \mathcal{M}''}{\rm sup}\mathbb{E}_{P_{X,Y}}[m(x,y)]\\
= &~ \underset{\theta \in \Theta }{\rm sup}\,\, \mathbb{E}_{P_{X,Y}}[\hat{f}_\theta (x,y)]\,\, {\rm s.t.}\,\,\mathbb{E}_{P_X  P_Y}[e^{\hat{f}_\theta(x,y)}]=1
\end{split}
\end{equation*}
with the optimal $m^*(x,y)= {\rm log}\,\frac{p(x,y)}{p(x)p(y)}$. And when $\Theta$ is large enough, the optimal $\hat{f}^*_\theta(x,y) = {\rm log}\, \frac{p(x,y)}{p(x)p(y)}$. 
\label{prop:KL_DM}
\end{prop}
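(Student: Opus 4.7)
The plan is to exploit the relationship between the constrained objective and the Kullback--Leibler divergence that motivated it, since a direct second-order functional-derivative argument (as used in the preceding propositions for $I_{\rm NWJ}$, $I_{\rm DV}$, $I_{\rm JS}$) is less clean here due to the integral constraint $\mathbb{E}_{P_X P_Y}[e^{m(x,y)}]=1$. Concretely, I would first rewrite
\[
\kldiv{P_{X,Y}}{P_{m,X,Y}} \;=\; \mathbb{E}_{P_{X,Y}}\!\Big[\log\tfrac{p(x,y)}{e^{m(x,y)}p(x)p(y)}\Big] \;=\; I(X;Y) - \mathbb{E}_{P_{X,Y}}[m(x,y)],
\]
so that maximizing $\mathbb{E}_{P_{X,Y}}[m(x,y)]$ over the feasible set $\mathcal{M}''$ is equivalent to minimizing a KL divergence. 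Since KL is non-negative and vanishes iff the two measures agree, the supremum equals $I(X;Y)$ and is attained precisely when $e^{m(x,y)}p(x)p(y)=p(x,y)$, i.e., $m^*(x,y)=\log\frac{p(x,y)}{p(x)p(y)}$. A quick sanity check confirms $m^*\in\mathcal{M}''$: $\mathbb{E}_{P_XP_Y}[e^{m^*(x,y)}]=\int \frac{p(x,y)}{p(x)p(y)}p(x)p(y)\,dxdy=1$.

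As an alternative derivation that parallels the Lagrangian display (Eq.~\eqref{eq:DM1}) already in the paper, I would set $L(m,\lambda)=\mathbb{E}_{P_{X,Y}}[m(x,y)]-\lambda\big(\mathbb{E}_{P_XP_Y}[e^{m(x,y)}]-1\big)$ and compute the first-order functional derivative, yielding the stationarity condition $dP_{X,Y}=\lambda\, e^{m(x,y)}\,dP_XP_Y$. Solving for $m$ gives $e^{m(x,y)}=\tfrac{1}{\lambda}\tfrac{p(x,y)}{p(x)p(y)}$, and plugging back into the constraint forces $\lambda=1$, thus reproducing $m^*(x,y)=\log\frac{p(x,y)}{p(x)p(y)}$. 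Concavity of the Lagrangian in $m$ (since $-\lambda e^{m}$ has strictly negative second-order functional derivative whenever $\lambda>0$) certifies that the stationary point is indeed a maximum.

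Finally, to bridge from the functional optimum to the neural parametrization $\hat{f}_\theta$, I would invoke the universal approximation theorem of \citet{hornik1989multilayer}, exactly as the paper does in the proofs of Propositions~\ref{prop:nwj}, \ref{prop:dv}, and \ref{prop:js}. When $\Theta$ is sufficiently large, the parametric family is dense in the relevant space of functions, so the supremum in the constrained problem over $\Theta$ matches the functional supremum and is attained by $\hat{f}^*_\theta(x,y)=\log\tfrac{p(x,y)}{p(x)p(y)}$.

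The main subtlety, rather than a true obstacle, is carefully handling the constraint $\hat{f}_\theta \in \mathcal{M}''$ under the universal approximation step: one must argue that networks that approximate $m^*$ arbitrarily well can be (re)normalized to exactly satisfy $\mathbb{E}_{P_XP_Y}[e^{\hat{f}_\theta}]=1$ without affecting the limiting value of the objective. This can be done either by adding a trainable scalar bias to absorb the normalization constant, or by passing to a dense subclass of $\mathcal{M}''$ directly; either route preserves tightness of the bound in the limit.
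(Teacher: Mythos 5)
Your proof takes essentially the same route as the paper: the paper's proof also first rewrites the constrained objective as a KL divergence $\kldiv{P_{X,Y}}{\hat{P}_{X,Y}}$ and then passes to the Lagrangian $\mathbb{E}_{P_{X,Y}}[m]-\lambda(\mathbb{E}_{P_XP_Y}[e^m]-1)$, solving the first-order stationarity condition and using the constraint to fix $\lambda=1$, before invoking~\citet{hornik1989multilayer} for the parametric step. Your direct observation that $\kldiv{\cdot}{\cdot}\geq 0$ with equality iff the densities match gives $m^*$ slightly more efficiently than the paper's Lagrangian calculation, and your concavity check and the remark on renormalizing the approximating networks to stay in $\mathcal{M}''$ are small refinements the paper omits but that strengthen the argument.
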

\begin{proof}
First, we compute the KL-divergence:
\begin{equation*}
\begin{split}
L_{\rm KL_{DM}} =\, &\underset{m\in \mathcal{M}''}{\rm inf}\,\,\kldiv{P_{X,Y}}{\hat{P}_{X,Y}} =  \underset{m\in \mathcal{M}''}{\rm inf}\,\, H(P_{X,Y}) - \mathbb{E}_{P_{X,Y}}\Big[{\rm log}\,e^{m(x,y)}p(x)p(y) \Big] \\
=\,& \underset{m\in \mathcal{M}''}{\rm inf}\,\, H(P_{X,Y}) - \mathbb{E}_{P_{X,Y}}\Big[{\rm log}\,p(x)p(y) \Big] - \mathbb{E}_{P_{X,Y}}\Big[m(x,y) \Big]
 \\
 =\,&\underset{m\in \mathcal{M}''}{\rm inf}\,\,I(X;Y) - 
 \mathbb{E}_{P_{X,Y}}\Big[m(x,y) \Big] = {\rm Const.} + \underset{m\in \mathcal{M}''}{\rm sup}\,\,
 \mathbb{E}_{P_{X,Y}}\Big[m(x,y) \Big] 
 \\
 \Leftrightarrow\,& \underset{m\in \mathcal{M}}{\rm sup}\,\mathbb{E}_{P_{X,Y}}[m(x,y)]\,\,{\rm s.t.}\,\,\mathbb{E}_{P_X  P_Y}[e^{m(x,y)}]=1.
\end{split}
\end{equation*}

Consider the following Lagrangian:
\begin{equation*}
\begin{split}
h(m, \lambda_1, \lambda_2) := \mathbb{E}_{P_{X,Y}}[m] - \lambda (\mathbb{E}_{P_X  P_Y}[e^m]-1),
\end{split}
\end{equation*}
where $\lambda \in \mathbb{R}$. Taking the functional derivative and setting it to be zero, we see
\begin{equation*}
\begin{split}
    dP_{X,Y} - \lambda \cdot e^m\cdot dP_{X}dP_{Y} = 0.
\end{split}
\end{equation*}
To satisfy the constraint, we obtain
\begin{equation*}
\begin{split}
    \mathbb{E}_{P_X  P_Y}[e^m] = 1 \iff {E}_{P_X  P_Y}[\frac{1}{\lambda}\frac{dP_{X,Y}}{dP_XP_Y}] = \frac{1}{\lambda} {E}_{P_X  P_Y}[\frac{dP_{X,Y}}{dP_XP_Y}] = \frac{1}{\lambda} = 1 \iff \lambda =1.
\end{split}
\end{equation*}
Plugging-in $\lambda = 1$, the optimal $m^*(x,y) = {\rm log}\, \frac{dP_{XY}}{dP_XP_Y} = {\rm log}\, \frac{p(x,y)}{p(x)p(y)}$. When $\Theta$ is large enough, by universal approximation theorem of neural networks~\cite{hornik1989multilayer}, the approximation in Proposition~\ref{prop:KL_DM} is tight, which means $\hat{f}^*_\theta(x,y) = m^*(x,y) = {\rm log}\,\frac{p(x,y)}{p(x)p(y)}$.
\end{proof}

The objective function in Proposition~\ref{prop:KL_DM} is a constrained optimization problem, and we present two relaxed optimization objectives. The first one is Lagrange relaxation:
\begin{equation*}
\small
\begin{split}
 \underset{\theta \in \Theta }{\rm sup}\,\, \mathbb{E}_{P_{X,Y}}[\hat{f}_\theta (x,y)] - \lambda \Big( \mathbb{E}_{P_X P_Y}[e^{\hat{f}_\theta(x,y)}] - 1 \Big) 
\end{split}
\end{equation*}
with the optimal Lagrange coefficient $\lambda = 1$ (see proof for Proposition~\ref{prop:KL_DM}). 

The second one is log barrier method:
\begin{equation*}
\small
\begin{split}
\underset{\theta \in \Theta}{\rm sup}\,\, \mathbb{E}_{P_{X,Y}}[\hat{f}_\theta (x,y)] - \eta\Big({\rm log}\, \mathbb{E}_{P_X P_Y}[
e^{\hat{f}_\theta(x,y)}]\Big)^2,
\end{split}
\end{equation*}
where $\eta > 0$ is a hyper-parameter controlling the regularization term.

\subsection{Method III: Probabilistic Classifier}
This approach casts the PD estimation as the problem of estimating the `class'-posterior probability. We use a Bernoulli random variable $C$ to classify the samples drawn from the joint density ($C=1$ for $(x,y)\sim P_{X,Y}$) and the samples drawn from product of the marginal densities ($C=0$ for $(x,y)\sim P_{X}P_{Y}$). In order to present our derivation, we define $H(\cdot)$ as the entropy and $H(\cdot , \cdot)$ as the cross entropy. Slightly abusing notation, in this subsection, we define $\Omega' = \mathcal{X}\times \mathcal{Y}\times \{0,1\}$ and $\mathcal{M}'$ is 1) any class of functions $m:\Omega ' \rightarrow (0,1)$; and 2) $m(x,y,0) + m(x,y,1)=1$ for any $x$ and $y$. Note that since $m(x,y,c)$ is always positive and $m(x,y,0)+m(x,y,1)=1$ for any $x,y$, $m(x,y,c)$ is a proper probability mass function with respect to $C$ given any $x,y$. Consider the binary cross entropy loss:
\begin{prop}[Binary Cross Entropy Loss in Probabilistic Classifier Method and its neural estimation]
\begin{equation*}
\begin{split}
L_{\rm BCE_{PC}} := & \underset{m \in \mathcal{M}'}{\rm sup}\,  \mathbb{E}_{P_{X,Y}}[{\rm log}\,m(x,y, C=1)]  + \mathbb{E}_{P_{X}P_{Y}}[{\rm log}\,\Big(1-m(x,y, C=1)\Big)] 
\\
= &\underset{\theta \in \Theta}{\rm sup}\,  \mathbb{E}_{P_{X,Y}}[{\rm log}\,\hat{p}_\theta (C=1|(x,y))]  + \mathbb{E}_{P_{X}P_{Y}}[{\rm log}\,\Big(1-\hat{p}_\theta (C=1|(x,y))\Big)] 
\end{split}
\end{equation*}
\label{prop:pc}
\end{prop}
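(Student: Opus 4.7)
The plan is to follow the same template used for Propositions~\ref{prop:nwj}--\ref{prop:js}: establish that the BCE objective is concave as a functional of $m$, compute its Euler--Lagrange condition, read off the optimizer, and then invoke the universal approximation theorem~\cite{hornik1989multilayer} to pass from the supremum over $\mathcal{M}'$ to the supremum over $\Theta$. Because $\mathcal{M}'$ already bakes in the simplex constraint $m(x,y,0)+m(x,y,1)=1$, I would first reparametrize by $q(x,y):=m(x,y,1)\in(0,1)$ and treat the optimization as an unconstrained pointwise problem over a single positive function.

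Under this reparametrization the objective becomes $\mathbb{E}_{P_{X,Y}}[\log q(x,y)] + \mathbb{E}_{P_XP_Y}[\log(1-q(x,y))]$. Its second functional derivative equals $-p(x,y)/q^2\cdot d\mu - p(x)p(y)/(1-q)^2\cdot d\mu$, which is strictly negative, so the objective is concave and attains its supremum at a unique interior critical point. Zeroing the first functional derivative yields $p(x,y)/q - p(x)p(y)/(1-q)=0$, and elementary algebra gives
\[
q^*(x,y) \;=\; \frac{p(x,y)}{p(x,y) + p(x)p(y)}.
\]
Identifying this with a posterior is the crucial step: because the two expectations are weighted equally in the objective, the implicit class prior is $p(C=1)=p(C=0)=1/2$, and Bayes' rule gives $p(C=1\mid x,y) = \tfrac{1}{2}p(x,y)/\bigl(\tfrac{1}{2}p(x,y)+\tfrac{1}{2}p(x)p(y)\bigr)$, which is exactly $q^*$. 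The $\theta$-side of the claimed equality then follows from universal approximation: taking $\Theta$ large enough, $\hat{p}_\theta(C=1\mid x,y)$ can approximate $q^*(x,y)$ pointwise (hence the expectations coincide as the approximation is tightened), so the two suprema are equal.

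The one subtlety I would flag, which is not an obstacle to the proposition itself but motivates the downstream estimator, is the prior mismatch between this population objective and the empirical training loss. The optimum $q^*$ recovers $p(C=1\mid x,y)$ only under a uniform class prior, which is precisely why the conversion from the trained classifier $\hat{p}_\theta$ to the PD estimator in Eq.~\eqref{eq:PC_all} carries the correction factor $n_{P_XP_Y}/n_{P_{X,Y}}$ accounting for $p(C=0)/p(C=1)$ on the sample side. Aside from this bookkeeping remark, the proof is routine concavity plus universal approximation, entirely parallel to the preceding propositions in the appendix, and I do not anticipate a substantive obstacle.
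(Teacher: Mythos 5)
Your proof is correct but takes a genuinely different route from the paper's. The paper rewrites the BCE objective as a sum of cross-entropies $H\bigl(P(C\mid x,y),\, m(x,y,C)\bigr)$ under $P_{X,Y}$ and $P_XP_Y$, decomposes each into an $m$-independent entropy term plus a KL term $\kldiv{P(C\mid x,y)}{m(x,y,C)}$, and concludes that the supremum is attained when the KL contributions vanish, i.e.\ $m^*(x,y,c)=p(c\mid x,y)$, with universal approximation transferring the supremum from $\mathcal{M}'$ to $\Theta$. You instead follow the Euler--Lagrange template of Propositions~\ref{prop:nwj}--\ref{prop:js}: reparametrize by $q=m(\cdot,\cdot,1)$, check concavity via the second functional derivative, solve the stationarity condition to get the closed form $q^*(x,y)=p(x,y)/(p(x,y)+p(x)p(y))$, and only then identify this with the posterior $p(C=1\mid x,y)$ under a uniform class prior. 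Your route buys two things the paper's framing does not make explicit: a closed-form optimizer without any information-theoretic decomposition, and a clean statement of the hidden prior $p(C=1)=p(C=0)=1/2$ implicit in the equal weighting of the two expectations --- which is exactly what the $n_{P_XP_Y}/n_{P_{X,Y}}$ factor in Eq.~\eqref{eq:PC_all} later corrects for, as you note. The paper's approach instead frames the result in the familiar cross-entropy/KL language of classification. Both reach the same conclusion; yours is the more explicit derivation and fits more tightly with the functional-derivative arguments used for the other propositions in the appendix.
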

with the optimal $m^*(x,y, c)=p(c|(x,y))$. And when $\Theta$ is large enough, the optimal $\hat{p}^*_\theta(c|(x,y)) = p(c|(x,y))$. 
\begin{proof}
We see 
\begin{equation*}
\begin{split}
L_{\rm BCE_{PC}} = &
\underset{m \in \mathcal{M}'}{\rm inf}\,  \mathbb{E}_{P_{XY}}\Big[ H\Big(P(C|(x,y)), m(x,y, C)\Big) \Big] + \mathbb{E}_{P_{X}P_{Y}}\Big[ H\Big(P(C|(x,y)), m(x,y, C))\Big) \Big]
\\
= & \underset{m \in \mathcal{M}'}{\rm inf}\,  \mathbb{E}_{P_{XY}}\Big[ H\Big(P(C|(x,y))\Big) +
\kldiv{P(C|(x,y))}{m(x,y, C)} \Big] 
\\
& \quad \quad + \mathbb{E}_{P_{X}P_{Y}}\Big[ H\Big(P(C|(x,y))\Big) + \kldiv{P(C|(x,y))}{m(x,y, C)} \Big]
\\
= & \,{\rm Const.} + \underset{m \in \mathcal{M}'}{\rm inf}\,   \mathbb{E}_{P_{XY}}\Big[ \kldiv{P(C|(x,y))}{m(x,y, C)} \Big] 
\\
& \quad \quad \quad \quad \quad \quad  + \mathbb{E}_{P_{X}P_{Y}}\Big[ \kldiv{P(C|(x,y))}{m(x,y, C)} \Big] 
\\
= & \,{\rm Const.} + \underset{m \in \mathcal{M}'}{\rm inf}\,  \mathbb{E}_{P_{XY}}\Big[ \mathbb{E}_{P(C|(x,y))}[-{\rm log}\,m(x,y,c)]\Big]
\\
& \quad \quad \quad \quad \quad \quad  + \mathbb{E}_{P_{X}P_{Y}}\Big[ \mathbb{E}_{P(C|(x,y))}[-{\rm log}\,m(x,y,c)]\Big]  
\\
= & \,{\rm Const.} + \underset{m \in \mathcal{M}'}{\rm inf}\,  \mathbb{E}_{P_{XY}}[-{\rm log}\,m(x,y,C=1)]+\mathbb{E}_{P_{X}P_{Y}}[-{\rm log}\,m(x,y,C=0)]
\\
\Leftrightarrow & \underset{m \in \mathcal{M}'}{\rm sup}\,  \mathbb{E}_{P_{X,Y}}[{\rm log}\,m(x,y,C=1)]  + \mathbb{E}_{P_{X}P_{Y}}[{\rm log}\,\Big(1-m(x,y,C=1)\Big)].
\end{split}
\end{equation*}
The optimal $m^*$ happens when $\kldiv{P(C|(x,y))}{m^*(x,y, C)}=0$ for any $(x,y)$, which implies $m^*(x,y,c)=p(c|(x,y))$. When $\Theta$ is large enough, by universal approximation theorem of neural networks~\cite{hornik1989multilayer}, the approximation in Proposition~\ref{prop:pc} is tight, which means $\hat{p}^*_\theta(c|(x,y)) = m^*(x,y,c) = p(c|(x,y))$.
\end{proof}

The obtained estimated class-posterior classifier can be used for approximating point-wise dependency (PD):
\begin{equation*}
\hat{r}_\theta (x,y)  = \frac{n_{P_XP_Y}}{n_{P_{X,Y}}} \frac{\hat{p}_\theta (C=1|(x,y))}{\hat{p}_\theta(C=0|(x,y))}\,\,{\rm with}\,\,(x,y)\sim P_{X,Y} \,\,{\rm or}\,\,(x,y)\sim P_{X}P_{Y}.
\end{equation*}

\subsection{Method IV: Density-Ratio Fitting}

Let $\mathcal{M}$ be any class of functions $m:\Omega \rightarrow \mathbb{R}$.
This approach considers to minimize the expected (in $\mathbb{E}_{P_XP_Y}[\cdot]$) least-square difference between the true PD $r(x,y)$ and the estimated PD $m(x,y)$: 
\begin{prop}[Least-Square Loss in Density-Ratio Fitting and its neural estimation]
\begin{equation*}
L_{\rm LS_{D-RF}}:=  \underset{m\in \mathcal{M}}{\rm sup}\,\mathbb{E}_{P_{X,Y}}[m(x,y)] - \frac{1}{2}\mathbb{E}_{P_XP_Y}[m^2(x,y)] = \underset{\theta \in \Theta}{\rm sup}\,\mathbb{E}_{P_{X,Y}}[\hat{r}_\theta(x,y)] - \frac{1}{2}\mathbb{E}_{P_XP_Y}[\hat{r}^2_\theta(x,y)]
\end{equation*}
with the optimal $m^*(x,y)=\frac{p(x,y)}{p(x)p(y)}$. And when $\Theta$ is larger enough, the optimal $\hat{r}^*_\theta(x,y) = \frac{p(x,y)}{p(x)p(y)}$.
\label{prop:LS_D-RF}
\end{prop}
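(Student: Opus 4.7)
The plan is to mirror the structure used for Propositions~\ref{prop:nwj},~\ref{prop:dv},~\ref{prop:js},~\ref{prop:KL_DM}, and~\ref{prop:pc}: solve the functional optimization over $\mathcal{M}$ first, verify that the stationary point is a maximizer via a second-order check, and then invoke the universal approximation theorem to lift the conclusion to the parametric family $\{\hat{r}_\theta : \theta\in\Theta\}$. I would also note an equivalent, perhaps more illuminating, derivation by completing the square against the true ratio $r(x,y) = p(x,y)/p(x)p(y)$.

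First I would take the first-order functional derivative of the objective $J(m) = \mathbb{E}_{P_{X,Y}}[m(x,y)] - \tfrac{1}{2}\mathbb{E}_{P_XP_Y}[m^2(x,y)]$ with respect to $m$ and set it to zero, yielding
\begin{equation*}
dP_{X,Y} - m(x,y)\, dP_X P_Y = 0,
\end{equation*}
so that $m^*(x,y) = dP_{X,Y}/dP_XP_Y = p(x,y)/p(x)p(y)$. To confirm this is a maximizer rather than a saddle point or minimizer, I would compute the second-order functional derivative, which is $-\,dP_XP_Y$; since this is non-positive as a signed measure, the objective is concave in $m$ and the stationary point is indeed a global supremum. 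This part is essentially a one-line calculation analogous to the ones in the previous proofs.

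As an alternative (and as a sanity check that the objective really corresponds to least-squares density-ratio fitting, justifying the name), I would expand
\begin{equation*}
\mathbb{E}_{P_XP_Y}\!\bigl[(r(x,y) - m(x,y))^2\bigr] = \mathbb{E}_{P_XP_Y}[r^2] - 2\,\mathbb{E}_{P_{X,Y}}[m(x,y)] + \mathbb{E}_{P_XP_Y}[m^2(x,y)],
\end{equation*}
using the change-of-measure identity $\mathbb{E}_{P_XP_Y}[r(x,y)\,m(x,y)] = \mathbb{E}_{P_{X,Y}}[m(x,y)]$. Dropping the $m$-independent constant $\mathbb{E}_{P_XP_Y}[r^2]$ and multiplying by $-1/2$ converts the minimization into exactly the maximization in the statement, so its unique minimizer $m^*(x,y) = r(x,y)$ coincides with the stationary point found above.

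Finally, I would invoke the universal approximation theorem for neural networks (the same citation \cite{hornik1989multilayer} used throughout this appendix) to conclude that when $\Theta$ is sufficiently expressive, the parametric supremum over $\{\hat{r}_\theta\}$ matches the unconstrained functional supremum, so that the optimal $\hat{r}^*_\theta(x,y) = p(x,y)/p(x)p(y)$. I do not expect any real obstacle here: the main subtlety is simply that this argument, as in the earlier propositions, is non-constructive and assumes $\Theta$ is large enough to contain (or arbitrarily well approximate) the target ratio; the concavity check ensures that the functional optimum is unique, which is what makes the ``large enough $\Theta$'' step clean.
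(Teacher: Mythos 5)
Your proposal is essentially the paper's own proof, just presented in the opposite order: the paper begins from the least-squares objective $\inf_m \mathbb{E}_{P_XP_Y}[(r-m)^2]$, expands it via the change-of-measure identity, drops the constant, and only then takes the first-order functional derivative, whereas you take the derivative of the given objective first and offer the least-squares expansion as a secondary sanity check. The one small addition you make is the explicit second-order check ($-\,dP_XP_Y \leq 0$, hence concavity and a unique global maximizer); the paper performs this check for the analogous NWJ/DV/JS propositions but omits it for this one, so your inclusion is a modest improvement rather than a divergence.
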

\begin{proof}
\begin{equation*}
\begin{split}
L_{\rm LS_{D-RF}} = & \underset{m \in \mathcal{M}}{\rm inf}\,\,\mathbb{E}_{P_X  P_Y} [\big(r(x,y) - m(x,y)\big)^2]
\\
= & \underset{m \in \mathcal{M}}{\rm inf}\,\,\mathbb{E}_{P_X  P_Y} [r^2(x,y)] -2\mathbb{E}_{P_X  P_Y} [r(x,y)m(x,y)] + \mathbb{E}_{P_X  P_Y} [m^2(x,y)]
\\
= & \,{\rm Const.} + \underset{m \in \mathcal{M}}{\rm inf}\,\,-2\mathbb{E}_{P_X  P_Y} [r(x,y)m(x,y)] + \mathbb{E}_{P_X  P_Y} [m^2(x,y)]
\\
= & \,{\rm Const.} + \underset{m \in \mathcal{M}}{\rm inf}\,\,-2\mathbb{E}_{P_{XY}} [m(x,y)] + \mathbb{E}_{P_X  P_Y} [m^2(x,y)]
\\
\Leftrightarrow \, & \underset{m \in \mathcal{M}}{\rm sup}\,\,\mathbb{E}_{P_{XY}} [m(x,y)] -\frac{1}{2} \mathbb{E}_{P_X  P_Y} [m^2(x,y)].
\end{split}
\end{equation*}
Take the first-order functional derivative and set it to zero:
\begin{equation*}
dP_{XY} - m(x,y)\cdot dP_X P_Y = 0. 
\end{equation*}
We then get $m^*(x,y) = \frac{dP_{X,Y}}{dP_{X}P_{Y}}=\frac{p(x,y)}{p(x)p(y)}$.
When $\Theta$ is large enough, by universal approximation theorem of neural networks~\cite{hornik1989multilayer}, the approximation in Proposition~\ref{prop:LS_D-RF} is tight, which means $\hat{r}^*_\theta(x,y) = m^*(x,y) = \frac{p(x,y)}{p(x)p(y)}$. 
\end{proof}

\section{More on Mutual Information Neural Estimation}

In this section, we present more analysis on estimating mutual information (MI) using neural networks. Before going into more details, we would like to 1) show $I_{\rm NWJ}$ and $I_{\rm DV}$ are MI lower bounds; and 2) present $I_{\rm CPC}$~\cite{oord2018representation} objective.

\begin{lemma}[$I_{\rm NWJ}$ as a MI lower bound] 
\begin{equation*}
    \forall \theta \in \Theta , \quad I(X;Y) \geq
    \mathbb{E}_{P_{X,Y}}[\hat{f}_\theta (x,y)] - e^{-1}\mathbb{E}_{P_{X}P_{Y}}[e^{\hat{f}_\theta (x,y)}].
\end{equation*}
Therefore, 
\begin{equation*}
    I(X;Y) \geq I_{\rm NWJ}:= \underset{\theta \in \Theta}{\rm sup}\,
    \mathbb{E}_{P_{X,Y}}[\hat{f}_\theta (x,y)] - e^{-1}\mathbb{E}_{P_{X}P_{Y}}[e^{\hat{f}_\theta (x,y)}].
\end{equation*}
\label{lemma:nwj}
\end{lemma}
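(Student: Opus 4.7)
The plan is to first establish the pointwise inequality
\[
I(X;Y) \geq \mathbb{E}_{P_{X,Y}}[\hat{f}_\theta(x,y)] - e^{-1}\mathbb{E}_{P_X P_Y}[e^{\hat{f}_\theta(x,y)}]
\]
for an arbitrary fixed $\theta \in \Theta$; the stated lower bound on $I_{\rm NWJ}$ then follows immediately by taking the supremum over $\Theta$ on the right-hand side and using monotonicity of the supremum. This is essentially a direct corollary of Proposition~\ref{prop:nwj}: that proposition exhibits a concave variational objective with unique maximizer $m^*(x,y) = 1 + \log(p(x,y)/p(x)p(y))$, and substituting $m^*$ into the objective yields exactly $I(X;Y) + 1 - e^{-1}\cdot e\cdot \mathbb{E}_{P_X P_Y}[p(x,y)/p(x)p(y)] = I(X;Y)$. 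Since $\{\hat{f}_\theta : \theta \in \Theta\}$ is a subclass of the measurable functions considered there, the supremum over $\Theta$ can only be smaller.

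For a self-contained derivation, I would instead start from the elementary inequality $\log t \geq 1 - 1/t$ for $t > 0$ (equivalently, Fenchel--Young applied to $\phi(u) = u\log u - u$). Writing $r(x,y) = p(x,y)/p(x)p(y)$, I decompose $\log r(x,y) = \log u(x,y) + \log(r(x,y)/u(x,y))$ for an auxiliary positive function $u(x,y)$, apply $\log t \geq 1 - 1/t$ to the second term, and use the change-of-measure identity $\mathbb{E}_{P_{X,Y}}[u/r] = \mathbb{E}_{P_X P_Y}[u]$ to obtain
\[
I(X;Y) \geq \mathbb{E}_{P_{X,Y}}[\log u(x,y)] + 1 - \mathbb{E}_{P_X P_Y}[u(x,y)].
\]
The key substitution is then $u(x,y) = e^{\hat{f}_\theta(x,y) - 1}$: the $+1$ generated by the log inequality cancels the $-1$ exponent shift, producing the characteristic $e^{-1}$ factor in front of the exponential-expectation term and delivering the pointwise bound.

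There is no real obstacle beyond identifying the substitution $u = e^{\hat{f}_\theta - 1}$ that generates the $e^{-1}$ coefficient; every other step is either the concavity inequality for $\log$ or a routine change of measure. Taking the supremum over $\theta \in \Theta$ of the right-hand side then yields $I(X;Y) \geq I_{\rm NWJ}$, as required.
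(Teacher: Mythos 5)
Your first paragraph reproduces the paper's proof exactly: invoke Proposition~\ref{prop:nwj} for concavity of the objective in $m$ and the unique maximizer $m^*(x,y) = 1 + \log\big(p(x,y)/p(x)p(y)\big)$, substitute to compute the supremum $I(X;Y)$, and note that every $\hat{f}_\theta$ lies inside the class $\mathcal{M}$, so the objective at any fixed $\theta$ is bounded above by that supremum. Your second, self-contained derivation is a genuinely different and arguably more robust route than what the paper writes. The paper's argument routes through functional derivatives (to establish concavity and locate the critical point); yours obtains the pointwise bound directly from the elementary inequality $\log t \geq 1 - 1/t$ applied to $t = r(x,y)/u(x,y)$, together with the change-of-measure identity $\mathbb{E}_{P_{X,Y}}[u/r] = \mathbb{E}_{P_X P_Y}[u]$, both of which hold for any positive measurable $u$ without any concavity or optimality considerations. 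The substitution $u = e^{\hat{f}_\theta - 1}$ then produces the $e^{-1}$ coefficient and delivers the NWJ form for each fixed $\theta$ directly, which is exactly the $\forall\theta$ statement to be proved; taking the supremum is then trivial. This is the classical Fenchel--Young (convex-conjugate) proof of the Nguyen--Wainwright--Jordan bound stated at the scalar level, and its advantage over the paper's version is that it yields the uniform-in-$\theta$ inequality by a one-line scalar argument rather than inferring it from a functional-analytic concavity claim plus a computed optimizer. Both routes are correct.
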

\begin{proof}
In Proposition~\ref{prop:nwj}, we show the supreme value of $\mathbb{E}_{P_{X,Y}}[\hat{f}_\theta (x,y)] - e^{-1}\mathbb{E}_{P_{X}P_{Y}}[e^{\hat{f}_\theta (x,y)}]$ happens when $\hat{f}_\theta^* (x,y) = 1+{\rm log}\,\frac{p(x,y)}{p(x)p(y)}$. Plugging-in $\hat{f}_\theta^* (x,y)$, we get
\begin{align*}
&\mathbb{E}_{P_{X,Y}}[\hat{f}^*_\theta (x,y)] - e^{-1}\mathbb{E}_{P_{X}P_{Y}}[e^{\hat{f}^*_\theta (x,y)}] = \mathbb{E}_{P_{X,Y}}[1+{\rm log}\,\frac{p(x,y)}{p(x)p(y)}] - e^{-1}\mathbb{E}_{P_{X}P_{Y}}[e^1\cdot \frac{p(x,y)}{p(x)p(y)}]\\
=& 1 + \mathbb{E}_{P_{X,Y}}[{\rm log}\,\frac{p(x,y)}{p(x)p(y)}] - e^{-1}\cdot e^{1} \cdot \mathbb{E}_{P_{X}P_{Y}}[\frac{p(x,y)}{p(x)p(y)}] = 1 + I(X;Y) - 1 = I(X;Y).\qedhere
\end{align*}    
\end{proof}

\begin{lemma}[$I_{\rm DV}$ as a MI lower bound] 
\begin{equation*}
    \forall \theta \in \Theta , \quad I(X;Y) \geq
    \mathbb{E}_{P_{X,Y}}[\hat{f}_\theta (x,y)] - {\rm log}\,\Big( \mathbb{E}_{P_{X}P_{Y}}[e^{\hat{f}_\theta (x,y)}]\Big).
\end{equation*}
Therefore, 
\begin{equation*}
    I(X;Y) \geq I_{\rm DV}:= \underset{\theta \in \Theta}{\rm sup}\,
    \mathbb{E}_{P_{X,Y}}[\hat{f}_\theta (x,y)] - - {\rm log}\, \Big( \mathbb{E}_{P_{X}P_{Y}}[e^{\hat{f}_\theta (x,y)}]\Big).
\end{equation*}
\label{lemma:dv}
\end{lemma}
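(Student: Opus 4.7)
The plan is to use the classical Gibbs-tilting argument to reduce the bound to non-negativity of a KL divergence, identical in spirit to the proof of the Donsker--Varadhan variational representation of KL divergence, and then take a supremum.

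First, I would fix an arbitrary $\theta \in \Theta$ and define the normalizing constant $Z \defeq \mathbb{E}_{P_X P_Y}[e^{\hat{f}_\theta(x,y)}]$, assumed finite. Using $Z$, I would introduce the tilted (Gibbs) distribution $G$ with density $g(x,y) \defeq e^{\hat{f}_\theta(x,y)} p(x) p(y) / Z$ with respect to the Lebesgue measure; by construction $g$ integrates to one and is a valid probability density. The key algebraic manipulation is then $\hat{f}_\theta(x,y) - \log Z = \log\bigl(g(x,y)/(p(x)p(y))\bigr)$.

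Next, I would take expectations under $P_{X,Y}$ of both sides to obtain
\begin{equation*}
\mathbb{E}_{P_{X,Y}}[\hat{f}_\theta(x,y)] - \log Z
= \mathbb{E}_{P_{X,Y}}\!\left[\log \frac{g(x,y)}{p(x)p(y)}\right].
\end{equation*}
Then I would split the logarithm by multiplying and dividing by $p(x,y)$ inside, yielding
\begin{equation*}
\mathbb{E}_{P_{X,Y}}\!\left[\log \frac{p(x,y)}{p(x)p(y)}\right] - \mathbb{E}_{P_{X,Y}}\!\left[\log \frac{p(x,y)}{g(x,y)}\right] = I(X;Y) - \kldiv{P_{X,Y}}{G}.
\end{equation*}
Since $\kldiv{P_{X,Y}}{G} \geq 0$ by Gibbs' inequality, this immediately gives $\mathbb{E}_{P_{X,Y}}[\hat{f}_\theta(x,y)] - \log Z \leq I(X;Y)$, which is the first claim (with the equality $\log Z = \log \mathbb{E}_{P_X P_Y}[e^{\hat{f}_\theta(x,y)}]$). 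Finally, since this inequality holds for every $\theta \in \Theta$, taking the supremum over $\theta$ on the left preserves the bound and yields $I(X;Y) \geq I_{\rm DV}$, as claimed.

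There is no real obstacle here beyond the standard mild measure-theoretic care: one should assume $Z < \infty$ (so the Gibbs distribution is well-defined) and that $P_{X,Y}$ is absolutely continuous with respect to $P_X P_Y$ (so that $I(X;Y)$ and the KL divergence are finite and the logarithms make sense). Both of these are implicit in the paper's setup. The only conceptual subtlety worth flagging is that tightness of the bound (equality) requires the tilted density to equal $p(x,y)$, i.e.\ $\hat{f}_\theta^*(x,y) = \log(p(x,y)/p(x)p(y)) + \text{const.}$, matching Proposition~\ref{prop:dv}; this tightness, however, is not needed for the lemma itself, which only asserts the lower-bound direction.
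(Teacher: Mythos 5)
Your proof is correct, and it takes a genuinely different route from the one the paper actually uses for this lemma. The paper's proof of Lemma~\ref{lemma:dv} proceeds by first invoking Proposition~\ref{prop:dv} (which uses a functional-calculus argument: the second-order functional derivative is everywhere negative, so the objective is concave, and the first-order condition identifies the critical point $\hat{f}^*_\theta(x,y) = \log\frac{p(x,y)}{p(x)p(y)} + \text{Const.}$), and then evaluates the objective at this optimizer, finding that the supremum equals $I(X;Y)$; the $\forall\theta$ inequality then follows because the supremum, having been shown to equal $I(X;Y)$, dominates every individual $\theta$. Your proof instead uses the classical Donsker--Varadhan tilting argument: introduce the Gibbs distribution $G$, rewrite the objective as the exact identity $\mathbb{E}_{P_{X,Y}}[\hat{f}_\theta] - \log Z = I(X;Y) - \kldiv{P_{X,Y}}{G}$, and invoke non-negativity of the KL divergence. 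Your route is more elementary and more direct: it establishes the pointwise bound for every $\theta$ without needing concavity of the objective, without invoking the optimality characterization at all, and it also immediately exhibits the exact gap $\kldiv{P_{X,Y}}{G}$, which is strictly more information than the inequality alone. The paper's route has the virtue of piggybacking on Proposition~\ref{prop:dv}, which it needs anyway for characterizing the PMI estimator, so no extra argument is introduced. (As an aside, the paper's supplementary contains a commented-out proof of Proposition~\ref{prop:dv} that is essentially your Gibbs-tilting decomposition, so the authors clearly considered this approach before opting for the functional-derivative one.) Your explicit flagging of the absolute-continuity and $Z < \infty$ assumptions is appropriate and is left implicit in the paper; note that if $Z = \infty$ the right-hand side is $-\infty$ and the bound holds vacuously, so nothing breaks.
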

\begin{proof}
In Proposition~\ref{prop:dv}, we show the supreme value of $\mathbb{E}_{P_{X,Y}}[\hat{f}_\theta (x,y)] - {\rm log}\, \Big(\mathbb{E}_{P_{X}P_{Y}}[e^{\hat{f}_\theta (x,y)}]\Big)$ happens when $\hat{f}_\theta^* (x,y) = {\rm Const.}+{\rm log}\,\frac{p(x,y)}{p(x)p(y)}$. Plugging-in $\hat{f}_\theta^* (x,y)$, we get
\begin{equation*}
\begin{split}
&\mathbb{E}_{P_{X,Y}}[\hat{f}^*_\theta (x,y)] - {\rm log}\, \Big(\mathbb{E}_{P_{X}P_{Y}}[e^{\hat{f}^*_\theta (x,y)}]\Big) \\
=& \mathbb{E}_{P_{X,Y}}[{\rm Const.}+{\rm log}\,\frac{p(x,y)}{p(x)p(y)}] - {\rm log}\, \Big(\mathbb{E}_{P_{X}P_{Y}}[e^{{\rm Const.}+{\rm log}\,\frac{p(x,y)}{p(x)p(y)}}]\Big)\\
=& {\rm Const.} + \mathbb{E}_{P_{X,Y}}[{\rm log}\,\frac{p(x,y)}{p(x)p(y)}] - {\rm Const.} \cdot \mathbb{E}_{P_{X}P_{Y}}[\frac{p(x,y)}{p(x)p(y)}] = I(X;Y).
\end{split}    
\end{equation*}
\qedhere
\end{proof}

\begin{prop}[$I_{\rm CPC}$, restating Contrastive Predictive Coding~\cite{oord2018representation}] With $\hat{c}_\theta (x,y)$ representing a real-valued measureable function on $\mathcal{X}\times \mathcal{Y}$ which is parametrized by a neural network $\theta$,
\begin{equation*}
\begin{split}
L_{\rm CPC} :=  \underset{\theta \in \Theta}{\rm sup}\,  \mathbb{E}_{(x_1,y_1)\sim P_{X,Y}, \cdots (x_n,y_n)\sim P_{X,Y}}[\frac{1}{n}\sum_{i=1}^{n}{\rm log}\,\frac{e^{\hat{c}_\theta(x_i, y_i)}}{\frac{1}{n}\sum_{j=1}^{n}e^{\hat{c}_\theta(x_i, y_j)}}]
\end{split}
\end{equation*} 
with an upper bound value ${\rm log}\,n$.
\label{prop:cpc}
\end{prop}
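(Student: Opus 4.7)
The plan is to reduce the claim to a deterministic, pointwise upper bound on the integrand, which then immediately passes through the expectation and the supremum. The key observation is that after a trivial rescaling, the fraction inside the logarithm becomes $n$ times a softmax coordinate, and the log of a softmax coordinate is non-positive.

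First, I would rewrite the ratio by multiplying numerator and denominator by $n$:
$$\frac{e^{\hat{c}_\theta(x_i, y_i)}}{\frac{1}{n}\sum_{j=1}^n e^{\hat{c}_\theta(x_i, y_j)}} \;=\; n \cdot \frac{e^{\hat{c}_\theta(x_i, y_i)}}{\sum_{j=1}^n e^{\hat{c}_\theta(x_i, y_j)}}.$$
The second factor on the right is the $i$-th entry of the softmax distribution on $\{y_1,\dots,y_n\}$ scored by $\hat{c}_\theta(x_i, \cdot)$, hence lies in $(0, 1]$. Taking the logarithm yields
$$\log\!\frac{e^{\hat{c}_\theta(x_i, y_i)}}{\frac{1}{n}\sum_{j=1}^n e^{\hat{c}_\theta(x_i, y_j)}} \;=\; \log n \,+\, \log\!\frac{e^{\hat{c}_\theta(x_i, y_i)}}{\sum_{j=1}^n e^{\hat{c}_\theta(x_i, y_j)}} \;\leq\; \log n,$$
and this bound holds pointwise for every index $i$, every realization of the samples, and every choice of $\theta\in\Theta$.

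Given this uniform bound, I would conclude by averaging over $i\in\{1,\dots,n\}$, taking expectation over the i.i.d. draws $(x_i,y_i)\sim P_{X,Y}$, and finally taking the supremum over $\theta$; each of these three operations is monotone and therefore preserves the inequality, yielding $L_{\rm CPC}\leq \log n$.

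The ``hard'' part here is essentially nil: the entire argument is bookkeeping around the identity $\log(n\cdot s) = \log n + \log s$ and the fact that a softmax entry is at most $1$. The only place one might hesitate is in checking that the monotonicity step is legitimate, i.e.\ that a deterministic pointwise upper bound on a function does transfer to its expectation and to the supremum of such expectations over $\theta$; both are standard and require no additional assumptions beyond measurability of $\hat{c}_\theta$, which is given in the statement.
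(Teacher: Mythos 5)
Your proposal is correct and takes essentially the same approach as the paper: both factor out the $\log n$ term via the rescaling $n \cdot \frac{e^{\hat{c}_\theta(x_i,y_i)}}{\sum_j e^{\hat{c}_\theta(x_i,y_j)}}$, and both then observe that the remaining fraction is at most $1$ (the paper by dropping the $j \neq i$ terms in the denominator, you by recognizing it as a softmax coordinate — the same bound phrased differently), before pushing the pointwise bound through the average, expectation, and supremum.
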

\begin{proof}
\begin{equation*}
\begin{split}
L_{\rm CPC} & =  \underset{\theta \in \Theta}{\rm sup}\,  \mathbb{E}_{(x_1,y_1)\sim P_{X,Y}, \cdots (x_n,y_n)\sim P_{X,Y}}[\frac{1}{n}\sum_{i=1}^{n}{\rm log}\,\frac{e^{\hat{c}_\theta(x_i, y_i)}}{\frac{1}{n}\sum_{j=1}^{n}e^{\hat{c}_\theta(x_i, y_j)}}]\\
& =  \underset{\theta \in \Theta}{\rm sup}\,  \mathbb{E}_{(x_1,y_1)\sim P_{X,Y}, \cdots (x_n,y_n)\sim P_{X,Y}}[\frac{1}{n}\sum_{i=1}^{n}{\rm log}\,\frac{e^{\hat{c}_\theta(x_i, y_i)}}{\sum_{j=1}^{n}e^{\hat{c}_\theta(x_i, y_j)}}] + {\rm log}\, n \\
& \leq \underset{\theta \in \Theta}{\rm sup}\,  \mathbb{E}_{(x_1,y_1)\sim P_{X,Y}, \cdots (x_n,y_n)\sim P_{X,Y}}[\frac{1}{n}\sum_{i=1}^{n}{\rm log}\,\frac{e^{\hat{c}_\theta(x_i, y_i)}}{e^{\hat{c}_\theta(x_i, y_i)}}] + {\rm log}\, n \\
& = \underset{\theta \in \Theta}{\rm sup}\,  \mathbb{E}_{(x_1,y_1)\sim P_{X,Y}, \cdots (x_n,y_n)\sim P_{X,Y}}[\frac{1}{n}\sum_{i=1}^{n}{\rm log}\,1] + {\rm log}\, n  \\
& = {\rm log}\, n .
\end{split}
\end{equation*} 
\end{proof}

\begin{lemma}[$I_{\rm CPC}$ as a MI lower bound]
\begin{equation*}
    \forall \theta \in \Theta , \quad I(X;Y) \geq
    \mathbb{E}_{(x_1,y_1)\sim P_{X,Y}, \cdots (x_n,y_n)\sim P_{X,Y}}[\frac{1}{n}\sum_{i=1}^{n}{\rm log}\,\frac{e^{\hat{c}_\theta(x_i, y_i)}}{\frac{1}{n}\sum_{j=1}^{n}e^{\hat{c}_\theta(x_i, y_j)}}].
\end{equation*}
Therefore, 
\begin{equation*}
    I(X;Y) \geq I_{\rm CPC}:=\underset{\theta \in \Theta}{\rm sup}\,
    \mathbb{E}_{(x_1,y_1)\sim P_{X,Y}, \cdots (x_n,y_n)\sim P_{X,Y}}[\frac{1}{n}\sum_{i=1}^{n}{\rm log}\,\frac{e^{\hat{c}_\theta(x_i, y_i)}}{\frac{1}{n}\sum_{j=1}^{n}e^{\hat{c}_\theta(x_i, y_j)}}].
\end{equation*}
\label{lemma:cpc}
\end{lemma}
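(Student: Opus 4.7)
The plan is to establish the pointwise inequality $\mathbb{E}_{P_{X,Y}^{\otimes n}}\big[\tfrac{1}{n}\sum_{i=1}^n\log\frac{e^{\hat{c}_\theta(x_i,y_i)}}{\frac{1}{n}\sum_j e^{\hat{c}_\theta(x_i,y_j)}}\big]\leq I(X;Y)$ for every $\theta\in\Theta$; the lemma then follows by taking a supremum. The strategy is the ``classification reinterpretation'' of CPC: rewrite the objective as the log-likelihood of correctly predicting which of $n$ candidate pairs is the true positive, apply Gibbs' inequality to dominate it by a mutual information, and then control that mutual information by $I(X;Y)$ through standard entropy bookkeeping.

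Since the $(x_i,y_i)$ are iid, all $n$ outer summands have the same expectation, so it suffices to analyze the $i=1$ term. To expose the classification structure, I introduce an auxiliary uniform $J\in\{1,\ldots,n\}$ together with an equivalent-in-distribution coupling in which, conditional on $(X_1,J=j)$, we set $Y_j\sim p(\cdot\mid X_1)$ and $Y_i\sim p(\cdot)$ iid for $i\neq j$: because the denominator $\tfrac{1}{n}\sum_j e^{\hat{c}_\theta(X_1,Y_j)}$ is symmetric in the $Y$'s, by exchangeability the positive can equivalently sit at any index and the expectation is unchanged. This lets me rewrite the log-ratio as $\log n + \log q_\theta(J\mid X_1,\mathbf{Y})$ with softmax ``classifier'' $q_\theta(j\mid x,\mathbf{y}) := e^{\hat{c}_\theta(x,y_j)}/\sum_k e^{\hat{c}_\theta(x,y_k)}$. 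Gibbs' inequality then yields $\mathbb{E}[\log q_\theta(J\mid X_1,\mathbf{Y})]\leq -H(J\mid X_1,\mathbf{Y})$, and since $H(J)=\log n$ with $J\perp X_1$, the original objective is bounded above by $I(J;\mathbf{Y}\mid X_1)$.

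Finally, I bound $I(J;\mathbf{Y}\mid X_1) = H(\mathbf{Y}\mid X_1) - H(\mathbf{Y}\mid X_1,J)$. Conditional on $(X_1,J)$, the $Y_i$'s are independent with $Y_J\sim p(\cdot\mid X_1)$ and $Y_i\sim p(\cdot)$ otherwise, so $H(\mathbf{Y}\mid X_1,J)=H(Y\mid X)+(n-1)H(Y)$ exactly. For the first term, subadditivity gives $H(\mathbf{Y}\mid X_1)\leq \sum_i H(Y_i\mid X_1)$, and marginalizing over $J$ reveals that each $Y_i$ has marginal distribution $p(y)$; hence $H(Y_i\mid X_1)\leq H(Y_i)=H(Y)$ and $H(\mathbf{Y}\mid X_1)\leq nH(Y)$. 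Subtracting yields $I(J;\mathbf{Y}\mid X_1)\leq H(Y)-H(Y\mid X)=I(X;Y)$, closing the chain.

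The main obstacle is the symmetry/relabeling manoeuvre in the middle step: reframing CPC as cross-entropy classification of the ``positive'' slot is the conceptual crux, and carefully tracking the conditional distributions of the $Y_i$'s before versus after injecting $J$ is where the argument is most easily botched. The information-theoretic bookkeeping that follows --- especially the marginal-over-$J$ observation needed to bound $H(Y_i\mid X_1)$ by $H(Y)$ --- is then routine but must be set up precisely.
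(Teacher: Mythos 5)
Your argument is correct, and it takes a genuinely different route from the paper's. The paper derives the CPC bound as a corollary of the NWJ lower bound (Proposition~\ref{prop:nwj} and Lemma~\ref{lemma:nwj}): it applies $I_{\rm NWJ}$ to the augmented pair $(X_i, Y_{1:n})$ with the specially engineered critic $\hat{f}_\theta(x_i,y_{1:n}) = 1 + \log\bigl(e^{\hat{c}_\theta(x_i,y_i)} / \tfrac{1}{n}\sum_j e^{\hat{c}_\theta(x_i,y_j)}\bigr)$, checks that the exponential correction term $e^{-1}\mathbb{E}_{P_{X_i}P_{Y_{1:n}}}[e^{\hat f_\theta}]$ collapses to exactly $1$ by exchangeability of the $Y_j$'s, and then uses $I(X_i;Y_{1:n}) = I(X_i;Y_i) = I(X;Y)$ since $Y_j\perp X_i$ for $j\neq i$. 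You instead give a first-principles information-theoretic argument: you reinterpret the CPC objective as the log-likelihood of a softmax classifier predicting the positive slot $J$, bound the cross-entropy term via Gibbs' inequality by $-H(J\mid X_1,\mathbf{Y})$, identify $\log n - H(J\mid X_1,\mathbf{Y}) = I(J;\mathbf{Y}\mid X_1)$ (using $J\perp X_1$ and $H(J)=\log n$), and then bound that conditional mutual information by $I(X;Y)$ via subadditivity of entropy and the fact that the unconditional marginal of each $Y_i$ under the $J$-mixture is $p(y)$, so conditioning on $X_1$ can only reduce its entropy below $H(Y)$. Your route is self-contained and makes the ``classification'' semantics of CPC explicit; the paper's is shorter once NWJ is in hand, at the cost of requiring a non-obvious critic choice. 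Both establish the per-$\theta$ inequality, from which the bound on $I_{\rm CPC}=\sup_\theta$ follows. One small phrasing caveat: the claim ``each $Y_i$ has marginal distribution $p(y)$'' holds only after marginalizing over both $J$ and $X_1$; the conditional $P(Y_i\mid X_1)$ is the mixture $\tfrac{1}{n}p(y\mid x)+\tfrac{n-1}{n}p(y)$, which is not $p(y)$. Your conclusion $H(Y_i\mid X_1)\leq H(Y_i)=H(Y)$ is still correct, since it only uses the unconditional marginal together with ``conditioning reduces entropy,'' but the intermediate wording could mislead a reader into thinking the conditional marginal is already $p(y)$.
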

\begin{proof}
First, we use independent and identical random variables $X_1 , X_2 , \cdots ,X_n$ and $Y_1, Y_2, \cdots , Y_n$ to represent the copies of $X$ and $Y$, where $(x_i , y_i)\sim P_{X_i , Y_i}$. Replacing the random variables in Lemma~\ref{lemma:nwj}, we obtain 
\begin{equation*}
\forall \theta \in \Theta , \quad I(X_i;Y_{1:n}) \geq
    \mathbb{E}_{P_{X_{i},Y_{1:n}}}[\hat{f}_\theta (x_i,y_{1:k})] - e^{-1}\mathbb{E}_{P_{X_i}P_{Y_{1:n}}}[e^{\hat{f}_\theta (x_i,y_{1:k})}].    
\end{equation*}
Next, we define $\hat{f}_\theta (x_i,y_{1:k}) = 1 +{\rm log}\,\frac{e^{\hat{c}_\theta(x_i, y_i)}}{\frac{1}{n}\sum_{j=1}^{n}e^{\hat{c}_\theta(x_i, y_j)}}$ and get
\begin{equation*}
\begin{split}
\forall \theta \in \Theta , \quad I(X_i;Y_{1:n}) \geq 1 + 
    \mathbb{E}_{P_{X_{i},Y_{1:n}}}[{\rm log}\,\frac{e^{\hat{c}_\theta(x_i, y_i)}}{\frac{1}{n}\sum_{j=1}^{n}e^{\hat{c}_\theta(x_i, y_j)}}] - \mathbb{E}_{P_{X_i}P_{Y_{1:n}}}[\frac{e^{\hat{c}_\theta(x_i, y_i)}}{\frac{1}{n}\sum_{j=1}^{n}e^{\hat{c}_\theta(x_i, y_j)}}].
\end{split}
\end{equation*}
Since $Y_1 , Y_2 , \cdots , Y_n$ are independent and identical samples, $\mathbb{E}_{P_{X_i}P_{Y_{1:n}}}[\frac{e^{\hat{c}_\theta(x_i, y_i)}}{\frac{1}{n}\sum_{j=1}^{n}e^{\hat{c}_\theta(x_i, y_j)}}] = \mathbb{E}_{P_{X_i}P_{Y_{1:n}}}[\frac{e^{\hat{c}_\theta(x_i, y_{i'})}}{\frac{1}{n}\sum_{j=1}^{n}e^{\hat{c}_\theta(x_i, y_j)}}] \,\, \forall i' \in \{1, 2, \cdots , n\}$. Therefore, $\mathbb{E}_{P_{X_i}P_{Y_{1:n}}}[\frac{e^{\hat{c}_\theta(x_i, y_i)}}{\frac{1}{n}\sum_{j=1}^{n}e^{\hat{c}_\theta(x_i, y_j)}}] = \frac{1}{n}\sum_{i'=1}^n \mathbb{E}_{P_{X_i}P_{Y_{1:n}}}[\frac{e^{\hat{c}_\theta(x_i, y_{i'})}}{\frac{1}{n}\sum_{j=1}^{n}e^{\hat{c}_\theta(x_i, y_j)}}] = \mathbb{E}_{P_{X_i}P_{Y_{1:n}}}[\frac{\frac{1}{n}\sum_{i'=1}^n e^{\hat{c}_\theta(x_i, y_{i'})}}{\frac{1}{n}\sum_{j=1}^{n}e^{\hat{c}_\theta(x_i, y_j)}}] = 1$. Plugging-in this result, we have
\begin{equation*}
\begin{split}
\forall \theta \in \Theta , \quad I(X_i;Y_{1:n}) \geq 1 + 
    \mathbb{E}_{P_{X_{i},Y_{1:n}}}[{\rm log}\,\frac{e^{\hat{c}_\theta(x_i, y_i)}}{\frac{1}{n}\sum_{j=1}^{n}e^{\hat{c}_\theta(x_i, y_j)}}] - 1 =  \mathbb{E}_{P_{X_{i},Y_{1:n}}}[{\rm log}\,\frac{e^{\hat{c}_\theta(x_i, y_i)}}{\frac{1}{n}\sum_{j=1}^{n}e^{\hat{c}_\theta(x_i, y_j)}}].
\end{split}
\end{equation*}
Note that $Y_{i'}$ is independent to $X_i$ when $i' \neq i$, and therefore $I(X_i;Y_{1:n}) = I(X_i;Y_i) = I(X;Y)$. 

Bringing everything together, the original objective can be reformulated as
\begin{equation*}
\begin{split}
    & \mathbb{E}_{(x_1,y_1)\sim P_{X,Y}, \cdots (x_n,y_n)\sim P_{X,Y}}[\frac{1}{n}\sum_{i=1}^{n}{\rm log}\,\frac{e^{\hat{c}_\theta(x_i, y_i)}}{\frac{1}{n}\sum_{j=1}^{n}e^{\hat{c}_\theta(x_i, y_j)}}] \\
   = &  \mathbb{E}_{P_{X_{1:n},Y_{1:n}}}[\frac{1}{n}\sum_{i=1}^{n}{\rm log}\,\frac{e^{\hat{c}_\theta(x_i, y_i)}}{\frac{1}{n}\sum_{j=1}^{n}e^{\hat{c}_\theta(x_i, y_j)}}] = \frac{1}{n}\sum_{i=1}^{n} \mathbb{E}_{P_{X_{i},Y_{1:n}}}[{\rm log}\,\frac{e^{\hat{c}_\theta(x_i, y_i)}}{\frac{1}{n}\sum_{j=1}^{n}e^{\hat{c}_\theta(x_i, y_j)}}] \\
   \leq & \frac{1}{n}\sum_{i=1}^n I(X_i; Y_{1:n}) = \frac{1}{n}\sum_{i=1}^n I(X; Y) = I(X;Y).
\end{split}
\end{equation*}
\end{proof}

\subsection{Learning/ Inference in MI Neural Estimation and Baselines}

The MI neural estimation methods can be dissected into two procedures: {\em learning} and {\em inference}. The learning step learns the parameters when estimating 1) point-wise dependency (PD)/ logarithm of point-wise dependency (PMI); or 2) MI lower bound. The inference step considers the parameters from the learning step and infers value for 1) MI itself; or 2) a lower bound of MI. We summarize different approaches in Table 1 in the main text, and we discuss the baselines in this subsection. We present the comparisons between baselines and our methods in Table 1/ Figure 1 in the main text. 

\paragraph{CPC} Oord {\em et al.}~\cite{oord2018representation} presented {\bf C}ontrastive {\bf P}redictive {\bf C}oding ({\bf CPC}) as an unsupervised learning objective, which adopts $I_{\rm CPC}$ (see Proposition~\ref{prop:cpc}) in both learning and inference stages. From Proposition~\ref{prop:cpc} and Lemma~\ref{lemma:cpc}, we conclude 
\begin{equation*}
    I_{\rm CPC} \leq {\rm min}\,\Big({\rm log}\,n , I (X;Y) \Big).
\end{equation*}
Hence, the difference between $I_{\rm CPC}$ and $I(X;Y)$ is large when $n$ is small. This fact implies a large bias when using $I_{\rm CPC}$ to estimate MI. Nevertheless, empirical evidences~\cite{poole2019variational,song2019understanding} showed that $I_{\rm CPC}$ has low variance, which is also verified in our experiments.

\paragraph{NWJ} Belghazi {\em et al.}~\cite{belghazi2018mine} presented to use neural networks to estimate {\bf N}guyen-{\bf W}ainwright-{\bf J}ordan bound~\cite{nguyen2010estimating,belghazi2018mine} ({\bf NWJ}) bound of MI, which adopts $I_{\rm NWJ}$ (see Proposition~\ref{prop:nwj}) in both learning and inference stages. In Proposition~\ref{prop:nwj} and Lemma~\ref{lemma:nwj}, we show that when $\Theta$ is large enough, the supreme value of $I_{\rm NWJ}$ is $I(X;Y)$. Hence, we can expect a smaller bias when comparing $I_{\rm NWJ}$ to $I_{\rm CPC}$. Song {\em et al.}~\cite{song2019understanding} acknowledged the variance of an empirical $I_{\rm NWJ}$ estimation is $\Omega (e^{I(X;Y)})$, suggesting a large variance when the true MI is large. We verify these facts in our experiments.

\paragraph{DV (MINE)} Belghazi {\em et al.}~\cite{belghazi2018mine} presented to use neural networks to estimate {\bf D}onsker-{\bf V}aradhan bound~\cite{nguyen2010estimating,belghazi2018mine} ({\bf DV}) bound of MI, which adopts $I_{\rm DV}$ (see Proposition~\ref{prop:dv}) in both learning and inference stages. The author also refers this MI estimation procedure as {\bf M}utual {\bf I}nformation {\bf N}eural {\bf E}stimation ({\bf MINE}). In Proposition~\ref{prop:dv} and Lemma~\ref{lemma:dv}, we show that when $\Theta$ is large enough, the supreme value of $I_{\rm DV}$ is $I(X;Y)$. Hence, we can expect a smaller bias when comparing $I_{\rm DV}$ to $I_{\rm CPC}$. Song {\em et al.}~\cite{song2019understanding} acknowledged the limiting variance of an empirical $I_{\rm DV}$ estimation is $\Omega (e^{I(X;Y)})$, which implies the variance is large when the true MI is large. We verify these facts in our experiments.

\paragraph{JS} Unlike {\bf CPC}, {\bf NWJ}, and {\bf DV}, Poole {\em et al.}~\cite{poole2019variational} presented to adopt different objectives in learning and inference stages for MI estimation. Precisely, the author uses Jensen-Shannon F-GAN~\cite{nowozin2016f} objective (see Proposition~\ref{prop:js}) to estimate PMI and then plugs in the PMI into $I_{\rm NWJ}$ (see Proposition~\ref{prop:nwj}) for the inference. The author refers this MI estimation method as {\bf JS} since it considers {\bf J}ensen-{\bf S}hannon divergence during learning. Unfortunately, this estimation method still considers $I_{\rm NWJ}$ as its inference objective, and therefore the variance is still $\Omega (e^{I(X;Y)})$. Empirical results are shown in our experiments.

\paragraph{SMILE} To overcome the large variance issue in {\bf NWJ}, {\bf DV}, and {\bf JS}, Song {\em et al.}~\cite{song2019understanding} presented to use $I_{\rm JS}$ (see Proposition~\ref{prop:js}) for estimating PMI and then plug in the PMI to a modified $I_{\rm DV}$ (see Proposition~\ref{prop:dv}). Specifically, the author clipped the value of $e^{\hat{f}_\theta (x,y)}$ in the second term of $I_{\rm DV}$ to control the variance during the inference stage. Although the modification introduces some bias for MI estimation, it is empirically admitting a small variance, which we also find in our experiments.

\subsection{Architecture Design in Experiments}

We follow the same training and evaluation protocal for Correlated Gaussians experiments in prior work~\cite{poole2019variational,song2019understanding}. We adopt the ``concatenate critic'' design~\cite{oord2018representation,poole2019variational,song2019understanding} for our neural network parametrized function. The neural network parametrized functions are $\hat{c}_\theta$ in {\bf CPC}, $\hat{f}_\theta$ in {\bf NWJ}/{\bf JS}/{\bf DV}/{\bf SMILE}/Variational MI Bounds/Density Matching I/Density Matchinig II, $\hat{r}_\theta$ in Density-Ratio Fitting, and $\hat{p}_\theta$ in Probabilistic Classifier. Take $\hat{c}_\theta$ as an example, the concatenate critic design admits $\hat{c}_\theta(x,y) = {{g}_\theta ([x,y])}$  with ${g}_\theta$ being multiple-layer perceptrons. We consider ${g}_\theta$ to be 1-hidden-layer neural network with $512$ neurons for each layer and ReLU function as the activation. The optimization considers batch size $128$ and Adam optimizer~\cite{kingma2014adam} with learning rate $0.001$. For a fair comparison, we fix everything except for the learning and inference objectives. Note that Probabilistic Classifier method applies sigmoid function to the outputs to ensure probabilistic outputs. We set $\eta=1.0$ in Density Matching II. 

\paragraph{Reproducibility} Please refer to our released code.

\subsection{Theoretical Analysis}
We restate the Assumptions in the main text:
\begin{assumption}[Boundedness of the density ratio; restating Assumption 1 in the main text]
There exist universal constants $C_l \leq C_u$ such that $\forall \hat{r}_\theta\in\FF$ and $\forall x,y$, $C_l\leq \log\hat{r}_\theta(x, y)\leq C_u$.
\label{assu:bound_restate}
\end{assumption}
\begin{assumption}[$\log$-smoothness of the density ratio; restating Assumption 2 in the main text]
There exists $\rho > 0$ such that for $\forall x,y$ and $\forall \theta_1, \theta_2\in\Theta$, $|\log\hat{r}_{\theta_1}(x, y) - \log\hat{r}_{\theta_2}(x, y)|\leq \rho\cdot \|\theta_1 - \theta_2\|$.
\label{assu:smooth_restate}
\end{assumption}

In what follows, we first prove the following lemma. The main idea is from~\citet{bartlett1998sample}, while here we focus on the covering number of the parameter space $\Theta$ using $L_2$ norm.
\begin{lemma}[estimation; restating Lemma 1 in the main text] Let $\eps > 0$ and $\covering(\Theta, \eps)$ be the covering number of $\Theta$ with radius $\eps$ under $L_2$ norm. Let $P_{X,Y}$ be any distribution where $S = \{x_i, y_i\}_{i=1}^n$ are sampled from and define $M\defeq C_u - C_l$, then 
\begin{equation}
    \Pr_S\left(\sup_{\hat{r}_\theta\in\FF} \left|\widehat{I}_\theta^{(n)}(X;Y) - \Exp_{P_{X,Y}}[\log \hat{r}_\theta(x, y)]\right| \geq \eps\right) \leq 2\mathcal{N}(\Theta, \eps/ 4\rho)\exp\left(-\frac{n\eps^2}{2M^2}\right).
\end{equation}
\label{lemma:1_restate}
\end{lemma}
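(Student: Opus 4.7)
The plan is a standard two-stage uniform-convergence argument: discretize $\Theta$ via a covering, apply Hoeffding's inequality pointwise on the resulting finite net, and then stitch the bound back to the full class $\FF$ using the Lipschitz-in-parameter assumption.

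First, I would build a minimal $(\eps/(4\rho))$-cover $\Theta_0\subseteq\Theta$ in the $L_2$ norm, so that $|\Theta_0|=\covering(\Theta,\eps/(4\rho))$. For each fixed $\theta\in\Theta_0$, Assumption~1 guarantees that the i.i.d.\ random variables $\log\hat{r}_\theta(x_i,y_i)$ lie in $[C_l,C_u]$ and hence have range at most $M=C_u-C_l$. Hoeffding's inequality then gives
\[
\Pr\!\left(\left|\widehat{I}_\theta^{(n)}(X;Y)-\Exp_{P_{X,Y}}[\log\hat{r}_\theta(x,y)]\right|\geq \eps/2\right)\leq 2\exp\!\left(-\frac{n\eps^2}{2M^2}\right),
\]
and a union bound over $\Theta_0$ extends this deviation estimate uniformly across the net at the cost of an extra factor of $\covering(\Theta,\eps/(4\rho))$.

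The second step transfers this finite-net control back to all of $\FF$. Given an arbitrary $\theta\in\Theta$, pick $\theta_0\in\Theta_0$ with $\|\theta-\theta_0\|\leq\eps/(4\rho)$; Assumption~2 then gives $|\log\hat{r}_\theta(x,y)-\log\hat{r}_{\theta_0}(x,y)|\leq\eps/4$ uniformly in $(x,y)$. Because this estimate is pointwise in $(x,y)$, it propagates to both the empirical average and the true expectation, so a three-term triangle inequality yields
\[
\left|\widehat{I}_\theta^{(n)}(X;Y)-\Exp_{P_{X,Y}}[\log\hat{r}_\theta]\right|\leq \frac{\eps}{2}+\left|\widehat{I}_{\theta_0}^{(n)}(X;Y)-\Exp_{P_{X,Y}}[\log\hat{r}_{\theta_0}]\right|.
\]
Hence the event $\{\sup_{\theta\in\Theta}|\widehat{I}_\theta^{(n)}-\Exp_{P_{X,Y}}[\log\hat{r}_\theta]|\geq\eps\}$ is contained in the event $\{\max_{\theta_0\in\Theta_0}|\widehat{I}_{\theta_0}^{(n)}-\Exp_{P_{X,Y}}[\log\hat{r}_{\theta_0}]|\geq\eps/2\}$, and composing this inclusion with the Hoeffding + union bound finishes the argument.

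I do not expect any substantive obstacle; the argument is essentially careful bookkeeping. The points requiring attention are (i) choosing the cover radius to be exactly $\eps/(4\rho)$, so that the Lipschitz slack contributes $\eps/4$ once for the empirical mean and once for the population mean, absorbing a total of $\eps/2$ and leaving $\eps/2$ for Hoeffding to handle; and (ii) noting that Assumption~2 is stated uniformly in $(x,y)$, which is what lets us bound the two integrals without any extra loss (a weaker $L_2(P_{X,Y})$-type Lipschitz condition would not suffice to control the empirical term on an arbitrary sample $S$).
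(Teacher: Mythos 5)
Your proposal is correct and follows essentially the same route as the paper: build an $(\eps/(4\rho))$-net, apply Hoeffding at radius $\eps/2$ to the $[C_l,C_u]$-bounded summands at each net point, union-bound over the net, and transfer to all of $\Theta$ by observing that the Lipschitz assumption (applied once to the empirical average and once to the population expectation, each contributing $\eps/4$) implies the deviation at any $\theta$ exceeds the deviation at its nearest net point by at most $\eps/2$. The only difference is cosmetic — you phrase the transfer step as an event inclusion while the paper writes it as a chain of probability inequalities — so the two arguments are the same.
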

\begin{proof}
Define $l_S(\theta)\defeq \widehat{I}_\theta^{(n)}(X;Y) - \Exp_{P_{X,Y}}[\log \hat{r}_\theta(x, y)]$. For $\theta_1, \theta_2\in\Theta$, we first bound the difference $|l_S(\theta_1) - l_S(\theta_2)|$ in terms of the distance between $\theta_1$ and $\theta_2$. To do so, for any joint distribution $P$ over $X\times Y$, we first bound the following difference:
\begin{align*}
    \left|\Exp_P[\log\hat{r}_{\theta_1}(x, y)] - \Exp_P[\log\hat{r}_{\theta_2}(x, y)]\right| &\leq \Exp_P[|\log\hat{r}_{\theta_1}(x, y) - \log\hat{r}_{\theta_2}(x, y)|] \\
    &\leq \Exp_P[\rho\cdot \|\theta_1 - \theta_2\|_2] \\
    &= \rho\cdot \|\theta_1 - \theta_2\|_2,
\end{align*}
where the first inequality is due to the triangle inequality and the second one is from Assumption~\ref{assu:smooth_restate}. Next we bound $|l_S(\theta_1) - l_S(\theta_2)|$ by applying the above inequality twice:
\begin{align*}
    |l_S(\theta_1) - l_S(\theta_2)| &= \left|\left(\widehat{I}_{\theta_1}^{(n)}(X;Y) - \Exp_{P_{X,Y}}[\log \hat{r}_{\theta_1}(x, y)]\right) - \left(\widehat{I}_{\theta_2}^{(n)}(X;Y) - \Exp_{P_{X,Y}}[\log \hat{r}_{\theta_2}(x, y)]\right)\right| \\
    &\leq \left|\widehat{I}_{\theta_1}^{(n)}(X;Y) - \widehat{I}_{\theta_2}^{(n)}(X;Y)\right| + \left|\Exp_{P_{X,Y}}[\log \hat{r}_{\theta_1}(x, y)] - \Exp_{P_{X,Y}}[\log \hat{r}_{\theta_2}(x, y)]\right| \\
    &\leq \rho\cdot \|\theta_1 - \theta_2\| + \rho\cdot \|\theta_1 - \theta_2\|_2 \\
    &= 2\rho\cdot \|\theta_1 - \theta_2\|.
\end{align*}
Now we consider the covering of $\Theta$. Since $\Theta$ is compact, it admits a finite covering. To simplify the notation, let $T\defeq \mathcal{N}(\Theta, \eps/4\rho)$ and let $\cup_{k = 1}^T \Theta_k$ be a finite cover of $\Theta$. Furthermore, assume $\theta_i\in\Theta_i$ be the center of the $L_2$ ball $\Theta_i$ with radius $\eps / 4\rho$. As a result, the following bound holds:
\begin{align*}
    \Pr_{S}(\sup_{\hat{r}_\theta\in\FF}|l_S(\theta)| \geq\eps) &= \Pr_{S}(\sup_{\theta\in\Theta}|l_S(\theta)| \geq\eps) \\
    &\leq \Pr_{S}(\cup_{k\in[T]}\sup_{\theta\in\Theta_k}|l_S(\theta)|\geq \eps) \\
    &\leq \sum_{k\in[T]}\Pr_{S}(\sup_{\theta\in\Theta_k}|l_S(\theta)|\geq\eps).
\end{align*}
The last inequality above is due to the union bound. Next, $\forall k\in[T]$, realize that the following inequality holds:
\begin{equation*}
    \Pr_{S}(\sup_{\theta\in\Theta_k}|l_S(\theta)|\geq\eps) \leq \Pr_{S}(|l_S(\theta_k)|\geq\eps / 2). 
\end{equation*}
To see this, note that the $L_2$ ball of $\Theta_k$ has radius $\eps / 4\rho$, hence $\sup_{\theta\in\Theta_k}|l_S(\theta) - l_S(\theta_k)|\leq 2\rho \cdot \eps / 4\rho = \eps / 2$, which yields:
\begin{align*}
    \Pr_{S}(\sup_{\theta\in\Theta_k}|l_S(\theta)|\geq\eps) &\leq \Pr_{S}(\sup_{\theta\in\Theta_k}|l_S(\theta) - l_S(\theta_k)| + |l_S(\theta_k)|\geq\eps) \\
    &\leq \Pr_{S}(|l_S(\theta_k)|\geq\eps / 2). 
\end{align*}
To proceed, it suffices if we could provide an upper bound for $\Pr_{S}(|l_S(\theta_k)|\geq\eps / 2)$. Now since $\log\hat{r}_{\theta_k}(x, y)$ is bounded for any pair of input $x, y$ by Assumption~\ref{assu:bound_restate}, it follows from the Hoeffding's inequality that
\begin{align*}
    \Pr_{S}(|l_S(\theta_k)|\geq\eps / 2) &= \Pr_{S}\left(\left|\widehat{I}_{\theta_k}^{(n)}(X;Y) - \Exp_{P_{X,Y}}[\log \hat{r}_{\theta_k}(x, y)]\right|\geq\eps / 2\right) \\
    &\leq 2\exp\left(-\frac{n\eps^2}{2M^2}\right).
\end{align*}
Now, combine all the pieces together, we have:
\begin{align*}
    \Pr_S(\sup_{\hat{r}_\theta\in\FF} \left|\widehat{I}_\theta^{(n)}(X;Y) - \Exp_{P_{X,Y}}[\log \hat{r}_\theta(x, y)]\right|\geq\eps) &= \Pr_S(\sup_{\theta\in\Theta} \left|l_S(\theta)\right|\geq\eps) \\
    &\leq \sum_{k\in[T]}\Pr_{S}(\sup_{\theta\in\Theta_k}|l_S(\theta)|\geq\eps) \\
    &\leq \mathcal{N}(\Theta, \eps/ 4\rho)\Pr_{S}(\sup_{\theta\in\Theta_k}|l_S(\theta)|\geq\eps) \\
    &\leq \mathcal{N}(\Theta, \eps/ 4\rho)\Pr_{S}(|l_S(\theta_k)|\geq\eps / 2) \\
    &\leq 2\mathcal{N}(\Theta, \eps/ 4\rho)\exp\left(-\frac{n\eps^2}{2M^2}\right).\qedhere
\end{align*}
\end{proof}

We restate the Lemma 2 in the main text:
\begin{lemma}[\citet{hornik1989multilayer}, approximation; restating Lemma 2 in the main text]
Let $\eps > 0$. There exists $d \in\Nat$ and a family of neural networks $\FF\defeq \{\hat{r}_\theta: \theta\in\Theta\subseteq\RR^d\}$ where $\Theta$ is compact, such that $\inf_{ \hat{r}_\theta\in\FF}\left|\Exp_{P_{X, Y}}[\log \hat{r}_\theta(x, y)] - I(X; Y)\right|\leq \eps$.
\label{lemma:2_restate}
\end{lemma}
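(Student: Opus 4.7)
The plan is to reduce the lemma to a standard universal approximation argument applied to $\log r$, where $r(x,y) = p(x,y)/p(x)p(y)$ is the true density ratio. By definition, $I(X;Y) = \Exp_{P_{X,Y}}[\log r(x,y)]$, so the quantity to be controlled becomes
\[
    \left|\Exp_{P_{X,Y}}[\log \hat{r}_\theta(x,y)] - \Exp_{P_{X,Y}}[\log r(x,y)]\right|\;\leq\;\Exp_{P_{X,Y}}\bigl[|\log \hat{r}_\theta - \log r|\bigr].
\]
Hence it is enough to exhibit a neural network $\hat{r}_\theta$ whose log is $\eps$-close to $\log r$ in $L^1(P_{X,Y})$.

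The first step is to invoke Hornik's $L^p$ universal approximation theorem: for any finite Borel measure $\mu$ on a Euclidean space and any $\mu$-measurable function $g$ that is finite $\mu$-almost everywhere, there is a single-hidden-layer feedforward network $g_\theta$ (with a continuous, nonpolynomial activation) such that $\int |g_\theta - g|\,d\mu < \eps$. Applying this with $\mu = P_{X,Y}$ and $g = \log r$ yields a finite-width network $g_\theta$ with $\Exp_{P_{X,Y}}[|g_\theta - \log r|] < \eps$. Define $\hat{r}_\theta \defeq \exp(g_\theta)$, so that $\log \hat{r}_\theta = g_\theta$ achieves the desired $L^1$ approximation, which by the triangle inequality above bounds the expectation gap by $\eps$.

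The second step is to argue that the parameter set $\Theta$ can be taken compact. Once $\eps$ is fixed, only finitely many weights, $d$ of them, are needed; the approximating network corresponds to a specific point $\theta_0\in\RR^d$, and restricting $\Theta$ to any closed Euclidean ball containing $\theta_0$ preserves the existence statement while making $\Theta$ compact. The infimum in the lemma is then realized (or at least $\eps$-approximated) within this compact family $\FF = \{\hat{r}_\theta : \theta\in\Theta\}$.

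The main subtlety, and the place where I would spend the most care, is reconciling the conclusion with Assumption~\ref{assu:bound_restate}, which requires $\log\hat{r}_\theta$ to be uniformly bounded in $[C_l, C_u]$ for every $\theta\in\Theta$ and every $(x,y)$. If $\log r$ is itself bounded, this is easily enforced by composing the network output with a smooth saturating nonlinearity at the top layer, which preserves both the approximation and the universal approximation property. If $\log r$ is unbounded (heavy-tailed densities), one first truncates to the event $\{|\log r|\leq B\}$ of $P_{X,Y}$-mass at least $1 - \eps/(2M)$ via dominated convergence, approximates $\log r$ clipped to $[-B,B]$ on this event, and absorbs the tail into an additional $\eps/2$ error. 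Either route gives the lemma, and the remainder of the theorem then follows by combining this approximation bound with the estimation bound of Lemma~\ref{lemma:1_restate} and a union bound over the $1-\delta$ event.
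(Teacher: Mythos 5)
The paper does not actually prove this lemma: both in the main text and in the supplementary it is simply stated and attributed to \citet{hornik1989multilayer}, with no further argument. Your proposal is therefore supplying the details that the citation leaves implicit, and the route you take (reduce $|\Exp_{P_{X,Y}}[\log\hat r_\theta] - I(X;Y)|$ to an $L^1(P_{X,Y})$ approximation of $\log r$, invoke Hornik's $L^p$-density theorem, set $\hat r_\theta \defeq \exp(g_\theta)$, and then restrict $\Theta$ to a compact ball around the approximating parameter) is the standard and correct one. Your final remark that the constructed family must also satisfy Assumption~\ref{assu:bound_restate} for Lemma~\ref{lemma:1_restate} to apply in the combined theorem is a real subtlety that the paper glosses over, and your proposed fix via a saturating output nonlinearity or a truncation of $\log r$ is the right way to handle it.

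One technical caveat: the version of Hornik's theorem you state is slightly too strong. The $L^p$-density result does not apply to arbitrary $\mu$-measurable functions that are finite $\mu$-a.e.; it requires the target to lie in $L^p(\mu)$. In the $L^1$ case this means you need $\log r\in L^1(P_{X,Y})$, not merely that $\log r$ is finite a.e. This is in fact satisfied whenever $I(X;Y)<\infty$ (which is implicitly assumed for the lemma to be non-vacuous), because the negative part of $\log r$ obeys $\Exp_{P_{X,Y}}[(\log r)_-]\le 1/e$ by the pointwise bound $-t\log t \le 1/e$ for $t\in(0,1)$ applied to $t = r(x,y)$ under the measure $P_X P_Y$, and the positive part then has finite integral because $\Exp_{P_{X,Y}}[\log r]=I(X;Y)<\infty$. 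You should either state this explicitly or route everything through your truncation step, which sidesteps the integrability issue entirely by approximating the bounded function $\log r$ clipped to $[-B,B]$.
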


Now, we are ready the present our theorem:
\begin{theorem}
Let $0 < \delta < 1$. There exists $d \in\Nat$ and a family of neural networks $\FF\defeq \{\hat{r}_\theta: \theta\in\Theta\subseteq\RR^d\}$ where $\Theta$ is compact, so that $\exists \theta^*\in\Theta$, with probability at least $1 - \delta$ over the draw of $S = \{x_i, y_i\}_{i=1}^n\sim P_{X,Y}^{\otimes n}$, 
\begin{equation}
    \left|\widehat{I}_{\theta^*}^{(n)}(X;Y) - I(X; Y)\right| \leq O\left(\sqrt{\frac{d + \log(1/\delta)}{n}}\right).
\end{equation}
\end{theorem}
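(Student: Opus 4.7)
The plan is to combine the estimation bound (Lemma 1) with the approximation bound (Lemma 2) via a standard triangle inequality decomposition, and then to control the covering number of the compact parameter space $\Theta \subseteq \RR^d$ under the $L_2$ norm. First, I would write
\[
\bigl|\widehat{I}_{\theta^*}^{(n)}(X;Y) - I(X;Y)\bigr| \;\leq\; \bigl|\widehat{I}_{\theta^*}^{(n)}(X;Y) - \Exp_{P_{X,Y}}[\log \hat{r}_{\theta^*}(x,y)]\bigr| \;+\; \bigl|\Exp_{P_{X,Y}}[\log \hat{r}_{\theta^*}(x,y)] - I(X;Y)\bigr|,
\]
so that the first term is an estimation error controlled by Lemma~\ref{lemma:1} and the second term is an approximation error controlled by Lemma~\ref{lemma:2}.

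Next, for any target accuracy $\eps > 0$, I would invoke Lemma~\ref{lemma:2} to obtain a dimension $d \in \Nat$ and a compact parameter set $\Theta \subseteq \RR^d$ together with a network $\hat{r}_{\theta^*}$ whose approximation gap to $I(X;Y)$ is at most $\eps$. With that fixed $\FF$ and $\Theta$ in hand, I would then apply Lemma~\ref{lemma:1} to the same family, yielding a uniform deviation bound that in particular covers $\theta^*$. The total error is thus at most $2\eps$ on the event guaranteed by Lemma~\ref{lemma:1}.

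To obtain the advertised rate I would set the failure probability on the right-hand side of Lemma~\ref{lemma:1} equal to $\delta$ and solve for $\eps$. Since $\Theta$ is a compact subset of $\RR^d$, a standard volume argument gives a covering bound of the form $\covering(\Theta, \eps / 4\rho) \leq (C \rho / \eps)^d$ for some constant $C$ depending on the diameter of $\Theta$. Taking logarithms, the condition
\[
2\left(\frac{C\rho}{\eps}\right)^{d} \exp\!\left(-\frac{n\eps^2}{2M^2}\right) \leq \delta
\]
reduces to $n\eps^2 \gtrsim d\log(1/\eps) + \log(1/\delta)$, which is satisfied (absorbing the $\log(1/\eps)$ into constants in the $O(\cdot)$) by the choice $\eps = O\bigl(\sqrt{(d + \log(1/\delta))/n}\bigr)$. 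Plugging this back into the triangle inequality gives the stated bound.

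The main obstacle is the bookkeeping in the last step: the covering-number bound introduces a $d\log(1/\eps)$ factor, which must be folded cleanly into $d + \log(1/\delta)$ so that no logarithmic factor shows up outside the big-$O$. This is the standard trick of noting that $\log(1/\eps) = O(\log n)$ and then absorbing polylogarithmic factors into the implicit constant, which is legitimate here because the statement is expressed with $O(\cdot)$ rather than an explicit constant. The rest of the argument (the triangle inequality split, the choice of $\eps$, and the instantiation of $\FF$ from Lemma~\ref{lemma:2}) is mechanical once the covering bound is in place.
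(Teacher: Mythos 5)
Your proposal follows the paper's proof essentially step for step: triangle-inequality split into estimation and approximation error, Lemma~2 to supply a good $\theta^*$ and fix $(\FF,\Theta,d)$, Lemma~1 for the uniform deviation bound, and then the covering-number bound for a compact $\Theta\subseteq\RR^d$ to solve for $\eps$. The one place where your bookkeeping diverges from the paper's is in handling the $d\log(1/\eps)$ term coming from $\log\covering(\Theta,\eps/4\rho)=O(d\log(\rho/\eps))$. You propose to observe $\log(1/\eps)=O(\log n)$ and absorb the polylogarithmic factor into the implicit constant; as stated, that is not quite right, since $\log n$ is not a constant in $n$ and would actually leave you with a rate of order $\sqrt{d\log n/n}$ rather than the advertised $\sqrt{d/n}$. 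The paper avoids this by applying the elementary inequality $\log x\leq x-1$ to $\log(\eps/8\rho)$, which converts the requirement into a genuine polynomial inequality $\eps^2 + c'\eps \gtrsim (d+\log(1/\delta))/n$ whose solution is exactly $\eps=O\bigl(\sqrt{(d+\log(1/\delta))/n}\bigr)$ with no stray logarithms. So your high-level strategy matches the paper, but you should replace the ``absorb $\log n$'' step with the $\log x\leq x-1$ bound (or an equivalent device) to recover the stated rate cleanly.
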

\begin{proof}
This theorem simply follows a combination of Lemma~\ref{lemma:1_restate} and Lemma~\ref{lemma:2_restate}. First, by Lemma~\ref{lemma:2_restate}, for $\eps > 0$, there exists $d \in\Nat$ and a family of neural networks $\FF\defeq \{\hat{r}_\theta: \theta\in\Theta\subseteq\RR^d\}$ where $\Theta$ is compact, such that there $\exists\theta^*\in\Theta$, 
\begin{equation*}
\left|\Exp_{P_{X, Y}}[\log \hat{r}_{\theta^*}(x, y)] - I(X; Y)\right|\leq \frac{\eps}{2}.    
\end{equation*}

Next, we perform analysis on the estimation error $\left|\widehat{I}_{\theta^*}^{(n)}(X;Y) - \Exp_{P_{X, Y}}[\log \hat{r}_{\theta^*}(x, y)]\right| \leq \frac{\eps}{2}$. Applying Lemma~\ref{lemma:1_restate} with the fact~\citep{anthony2009neural} that for $\Theta\subseteq\RR^d$, $\log\mathcal{N}(\Theta, \eps / 4\rho) = O(d\log(\rho / \eps))$, we can solve for $\eps$ in terms of the given $\delta$. It suffices for us to find $\eps \rightarrow \frac{\eps}{2}$ such that:
\begin{equation*}
    2\mathcal{N}(\Theta, \eps/ 8\rho)\exp\left(-\frac{n\eps^2}{8M^2}\right) \leq \delta,
\end{equation*}
which is equivalent to finding $\eps$ such that the following inequality holds:
\begin{equation*}
    c\cdot d\log\frac{\eps}{8\rho} + \frac{n\eps^2}{8M^2} \geq \log\frac{2}{\delta},
\end{equation*}
where $c$ is a universal constant that is independent of $d$. Now, using the inequality $\log(x)\leq x-1$, it suffices for us to find $\eps$ such that
\begin{equation*}
    c\cdot d\left(\frac{\eps}{8\rho} - 1\right) + \frac{n\eps^2}{8M^2} \geq c\cdot d\log\frac{\eps}{8\rho} + \frac{n\eps^2}{8M^2} \geq \log\frac{2}{\delta},
\end{equation*}
which is in turn equivalent to solving:
\begin{equation*}
    \eps^2 + c'\eps \geq \left(\log\frac{2}{\delta} + cd\right)\cdot\frac{8M^2}{n}, 
\end{equation*}
where $c' = c'(c, d, \rho, n, M)$. Nevertheless, in order for the above inequality to hold, it suffices if we choose 
\begin{equation*}
    \eps = O\left(\sqrt{\frac{d + \log(1/\delta)}{n}}\right).
\end{equation*}
The final step is to combine the above two inequalities together:
\begin{align*}
\left|\widehat{I}_{\theta^*}^{(n)}(X;Y) - I(X; Y)\right| &\leq \left|\widehat{I}_{\theta^*}^{(n)}(X;Y) - \Exp_{P_{X, Y}}[\log \hat{r}_{\theta^*}(x, y)]\right| + \left|\Exp_{P_{X, Y}}[\log \hat{r}_{\theta^*}(x, y)] - I(X; Y)\right| \\
&\leq \frac{\eps}{2} + \frac{\eps}{2} = O\left(\sqrt{\frac{d + \log(1/\delta)}{n}}\right).\qedhere
\end{align*}
\end{proof}

\section{More on Self-supervised Representation Learning}

In the main text, we have shown how we adapt the proposed point-wise dependency estimation approaches (Probabilistic Classifier and Density-Ratio Fitting) to contrastive learning objectives (Probabilistic Classifier Coding and Density-Ratio Fitting Coding) for self-supervised representation learning. Following the adaptation, it is straightforward to define new contrastive learning objectives that are inspired by other presented approaches such as Variational MI Bounds, Density Matching I ,and Density Matching II. Nevertheless, instead of presenting new objectives, we would like to discuss 1) the connection between Probabilistic Classifier and Variational MI Bounds; 2) the connection between Density Matchinig I/II  and $I_{\rm NWJ}$ (see Proposition~\ref{prop:nwj}); and 3) the potential limitations of the new objectives. Next, we will discuss the baseline method Contrastive Predictive Coding (CPC). Last, we present the experimental details.

\subsection{Connection between Probabilistic Classifier and Variational MI Bounds}
\label{subsec:pc_vmib}

Proposition~\ref{prop:pc} states that the Probabilistic Classifier approach admits a classification task to differentiate the pairs sampled from a joint distribution or the product of marginal distribution. This classification task minimizes the binary cross entropy loss, which is highly optimized and stabilized in popular optimization packages such as PyTorch~\cite{paszke2019pytorch} and TensorFlow~\cite{abadi2016tensorflow} (e.g., log-sum-exp trick for numerical stability). Note that, if we let $\hat{p}_\theta = {\rm sigmoid} \,\big(l_\theta \big)$ with $l_\theta$ being the logits model, then reformulating Probabilistic Classifier to optimizing $l_\theta$ leads to the same objective as $I_{\rm JS}$ (see Proposition~\ref{prop:js}), which is the learning objective of {\em Variational MI Bounds} method. Although being the same objective as the Probabilistic Classifier approach, $I_{\rm JS}$ may encounter a relatively higher training instability (unless a particular take-care on its numerical instability). As pointed out by Tschannen {\em et al.}~\cite{tschannen2019mutual}, contrastive learning approaches with higher variance may result in a lower down-stream task performance, which accords with our empirical observation.

\subsection{Connection between Density Matching I/II and $I_{\rm NWJ}$}

Density Matching I/II approaches are derived from the KL loss between the true joint density and estimated joint density ($L_{\rm KL_{DM}}$ in Proposition~\ref{prop:KL_DM}). Specifically, Density Matching I is a Lagrange relaxation of $L_{\rm KL_{DM}}$. If we change $\hat{f}_\theta + 1= \hat{f}'_\theta$ in Density Matching I approach, then reformulating our objective to optimizing $\hat{f}'_\theta$ leads to the same objective as $I_{\rm NWJ}$ (see Proposition~\ref{prop:nwj}). Song {\em et al.}~\cite{song2019understanding} acknowledged the variance of an empirical $I_{\rm NWJ}$ estimation is $\Omega (e^{I(X;Y)})$, and hence the variance is large unless $I(X;Y)$ is small. Having the same conclusion in Sec~\ref{subsec:pc_vmib}, our empirical observation finds Density Matching I/II lead to worsened representation as comparing to other contrastive learning objectives.

\subsection{Contrastive Predictive Coding (CPC) for Contrastive Representation Learning}

Contrastive Predictive Coding (CPC)~\cite{oord2018representation} adapts $I_{\rm CPC}$ (see Proposition~\ref{prop:cpc}) to a contrastive representation learning objective:
\begin{equation*}
\begin{split}
 \underset{F,G}{\rm sup}\,\,\underset{\theta \in \Theta}{\rm sup}\,\,  \mathbb{E}_{(v_1^1,v_2^1)\sim P_{\mathcal{V}_1,\mathcal{V}_2}, \cdots (v_1^n,v_2^n)\sim P_{\mathcal{V}_1,\mathcal{V}_2}}[\frac{1}{n}\sum_{i=1}^{n}{\rm log}\,\frac{e^{\hat{c}_\theta(F(v_1^i), G(v_2^i))}}{\frac{1}{n}\sum_{j=1}^{n}e^{\hat{c}_\theta(F(v_1^i), G(v_2^j))}}],
\end{split}
\end{equation*} 
where $\{v_1^i, v_2^i\}_{i=1}^n$ are independently and identically sampled from $P_{\mathcal{V}_1,\mathcal{V}_2}$. $\hat{c}_\theta (\cdot)$ is a function that takes the representations learned from the data pairs and returns a scalar.

\subsection{Experiments Details}

\paragraph{Datasets} We adopt  MNIST~\cite{lecun1998gradient} and CIFAR10~\cite{krizhevsky2009learning} as the datasets in our experiments. MNIST contains $60,000$ training and $10,000$ test examples. Each example is a grey-scale digit image ($0 \sim 9$) with size $28\times 28$. CIFAR10 contains $50,000$ training and $10,000$ test examples. Each example is a $32 \times 32$ colour image from $10$ mutual exclusive classes: \{airplane, automobile, bird, cat, deer, dog, frog, horse, ship, truck\}. 

\paragraph{Pre-training and Fine-tuning} Our self-supervised learning experiments contain two stages: {\em pre-training} and {\em fine-tuning}. In pre-training stage, we learn representation from the training samples using contrastive learning objectives (e.g., Probabilistic Classifier Coding (PCC), Density-Ratio Fitting Coding (D-RFC), and Contrastive Predictive Coding (CPC)~\cite{oord2018representation}). View 1 ($\mathcal{V}_1$) and 2 ($\mathcal{V}_2$) are generated by augmenting the input with different transformations. For example, given an input, $v_1$ can be the 15-degree-rotated one and $v_2$ can be the horizontally flipped one. For {\bf shallow} experiment, we consider the same data augmentations adopted in Tschannen~\cite{tschannen2019mutual}; for {\bf deep} experiment, we consider the same data augmentations adopted in Bachman~\cite{bachman2019learning}. In fine-tuning stage, the network in the pre-training stage is fixed; we train only the classifier for minimizing classification loss from the  representations. We follow linear evaluation protocol~\cite{oord2018representation,henaff2019data,tian2019contrastive,hjelm2018learning,bachman2019learning,tschannen2019mutual,kolesnikov2019revisiting} such that the classifier is a linear layer. After the pre-training and fine-tuning stages, we evaluate the performance of the model on the test samples.

\paragraph{Architectures} To clearly understand how contrastive learning objectives affect the down-stream performance, we fix the network, learnnig rate, optimizer, and batch size across different objectives. To be more precise, we stick to the official implementations by Tschannen {\em et al.}~\cite{tschannen2019mutual} (for {\bf shallow} experiment) and Bachman {\em et al.}~\cite{bachman2019learning} (for {\bf deep} experiment). The only change is the contrastive learning objective, which is the loss in the pre-training stage for self-supervised learning experiments. 

\paragraph{Reproducibility} One can refer to \url{https://github.com/google-research/google-research/tree/master/mutual_information_representation_learning} and \url{https://github.com/Philip-Bachman/amdim-public} for the authors' official implementations, or checking the details in our released code. 

\paragraph{Consistent Trend on SimCLR~\cite{chen2020simple}} We also evaluate CPC, PCC, and D-RFC in SimCLR~\cite{chen2020simple}, which is a SOTA model and method on self-supervised representation learning. Note that the default contrastive learning objective considered in SimCLR~\cite{chen2020simple} is CPC, which obtains $91.04\%$ test accuracy on CIFAR-10 (average for $5$ runs). Details can be found in \url{https://github.com/google-research/simclr}. Similar to our {\bf shallow} and {\bf deep} experiments, we only change the contrastive learning objectives in SimCLR, and observing $91.51\%$ and $88.69\%$ average test accuracy for D-RFC and PCC, respectively. The trend is consistent with our {\bf deep} experiment, where D-RFC works slightly better than CPC and PCC works slightly worse than CPC.

\section{More on Cross-Modal Learning}

\begin{figure}[t!]
\centering
\includegraphics[width=0.8\linewidth]{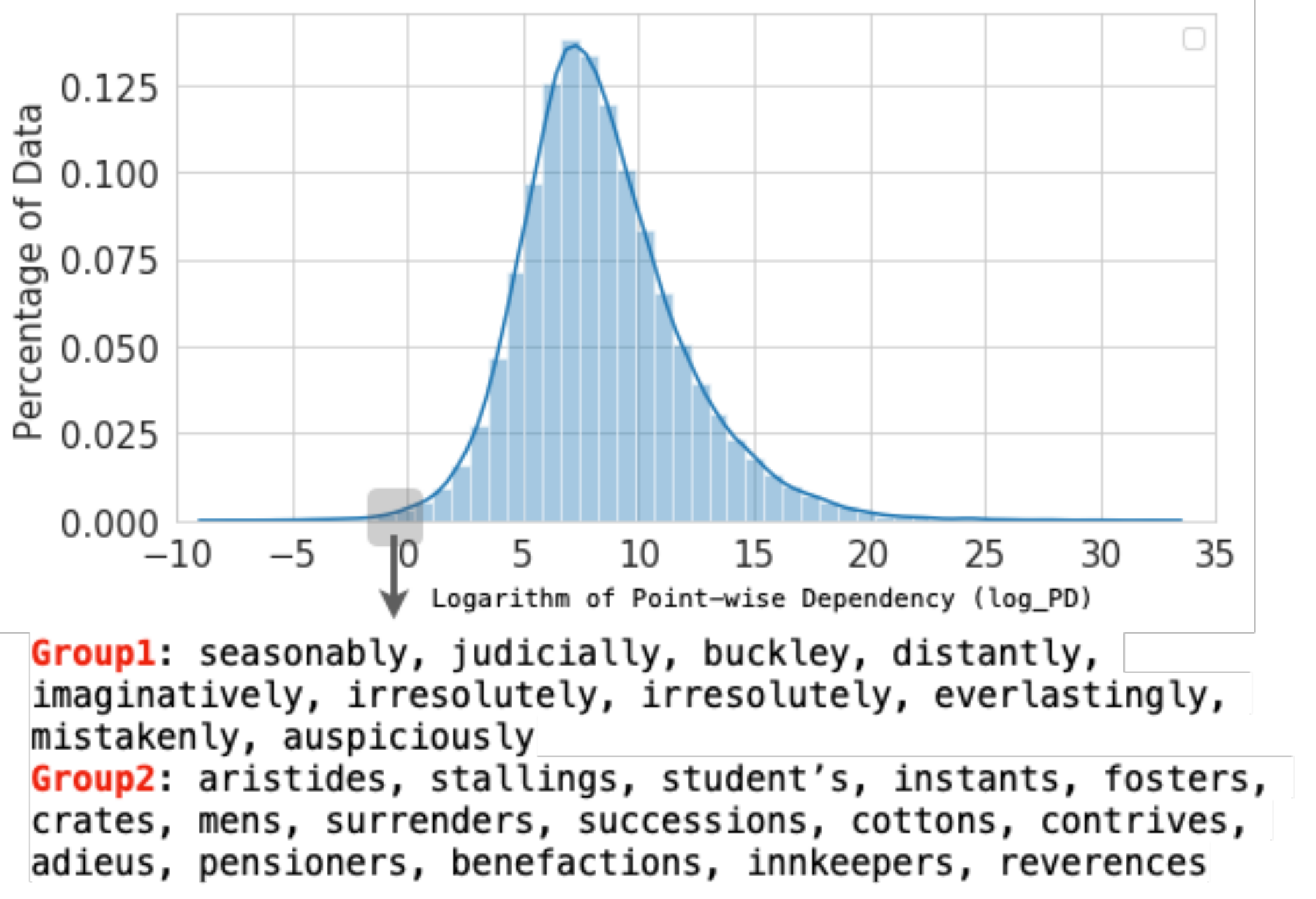}
  \caption{\small 
  {\bf Dataset Debugging} task with  unsupervised word features across acoustic and textual modalities. {\em Probabilistic Classifier} approach is used to estimate PD between the audio and textual feature of a given word. The estimator is trained on the training split.
  We plot the logarithm of PD (i.e., PMI) distribution for the training words. We select the words with negative PMI values and categorize them into two groups: one contains the words end in ``ly'' and another containts the words end in ``s''. 
  }
\label{fig:CM}
\vspace{-5mm}
\end{figure}

\paragraph{Another Case Study: Cross-modal Adversarial Samples Debugging} One important topic in interpretable machine learning~\cite{molnar2019interpretable} is dataset debugging, which detects adversarial samples in a given dataset. For instance, in this dataset, an adversarial word feature would have low statistical dependency between its audio and textual representations. In Fig.~\ref{fig:CM}, we report the PMI distribution and highlight the training words with PMI $<0$ (i.e., the adversarial samples). We note that a negative PMI means the audio and textual features are either statistically independent or even co-occur less frequently than the independent assumption. 

First, we find the distribution of PMI resembles a Gaussian distribution. The mean of the PMI values is MI, and our empirical estimation for it is 8.37. Our goal is to identify the training samples with PMI that deviates far from MI, and especially for the samples with negative PMI. There are 147 words have negative PMI values, approximately 0.45\% of the training words. Next, we select some of these words and categorize them into two groups. The first group contains the words end in ``ly'' and another group contains the words end in ``s''. That is to say, the words end in ``ly'' and ``s'' are adversarial training sample in our analysis. To sum up, we demonstrate how our PD estimation approach can be used to detect adversarial training examples in a cross-modal dataset.

\paragraph{Dataset} We construct a dataset that contains features from Word2Vec~\cite{mikolov2013efficient} and Speech2Vec~\cite{chung2018speech2vec}. Word2Vec is an unsupervised word embedding learning technique that takes a large text corpus of text as input and produces a fixed-length vector space. Specifically, each word in the corpus is assigned a real-valued and fixed-dimensional feature embedding. Similar to Word2Vec, Speech2Vec takes a large corpus of human speech as input and produces a fixed-length vector space. Specifically, it transforms a variable-length speech segment (a word in the speech corpus) as a real-valued and fixed-dimensional feature embedding. There are $37,622$ words shared across Word2Vec and Speech2Vec, where we consider $32,622$ words of them (randomly selected) to be the training split and $5,000$ of them to be the test split. That is to say, each word contains a textual feature (from Word2Vec) and an audio feature (from Speech2Vec), with both feature being $100-$dimensional. The dataset can be downloaded from \url{https://github.com/iamyuanchung/speech2vec-pretrained-vectors} and we include the training/test split in our released code.

\paragraph{Training and Architectures} We adopt the ``separate critic'' design~\cite{oord2018representation,poole2019variational,song2019understanding} for our neural network parametrized function. Suppose $\hat{l}_\theta$ is the logits model in Probabilistic Classifier approach, and the separate critic design admits $\hat{l}_\theta(x,y) = {{g_x}_\theta (x)}^\top {g_y}_\theta (y)$ with ${g_x}_\theta$ and ${g_y}_\theta$ being different multiple layer perceptrons. We consider ${g_x}_\theta$ and ${g_y}_\theta$ to be 1-hidden-layer neural network with $512$ neurons for intermediate layers, $128$ neurons for the output layer, and ReLU function as the activation. The optimization considers batch size $512$ and Adam optimizer~\cite{kingma2014adam} with learning rate $0.001$. A sigmoid function is applied to $\hat{l}_\theta$ ($\hat{p}_\theta = {\rm sigmoid} (\hat{l}_\theta)$) to ensure $\hat{p}_\theta$ is a probabilistic output. We consider $100$ training epochs.

\paragraph{Reproducibility} Please refer to our released code, where we also include the dataset and its training/ test split.
\section{Practical Deployment for Expectation(s)}

In practice, the expectations in Propositions~\ref{prop:nwj},~\ref{prop:dv},~\ref{prop:js},~\ref{prop:KL_DM},~\ref{prop:pc},~\ref{prop:LS_D-RF}, and~\ref{prop:cpc} are estimated using empirical samples from $P_{X,Y}$ and $P_{X}P_Y$. With mild assumptions on the compactness of $\Theta$ and the boundness of our measurement, the estimation error would be small by uniform law of large numbers~\cite{van2000asymptotic}.

\end{document}